\documentclass{article}

\PassOptionsToPackage{numbers, compress}{natbib}

\usepackage[final]{staix_2026}         

\usepackage[utf8]{inputenc}
\usepackage[T1]{fontenc}
\usepackage[colorlinks,citecolor=blue,urlcolor=blue]{hyperref}
\usepackage{url}
\usepackage{booktabs}
\usepackage{amsfonts}
\usepackage{nicefrac}
\usepackage{microtype}
\usepackage{xcolor}
\usepackage{graphicx}
\usepackage{amsmath,amssymb,amsthm}
\usepackage{algorithm}
\usepackage{algorithmic}
\usepackage{soul}
\usepackage{enumitem}
\usepackage{bm}
\usepackage{subcaption}
\usepackage{mathtools}
\usepackage{booktabs}
\usepackage{todonotes}
\usepackage{aliascnt}
\usepackage{multirow}
\usepackage{tcolorbox}
\usepackage{wrapfig}
\usepackage[capitalize,noabbrev]{cleveref}

\usepackage{amsmath,amsfonts,bm}

\def\eqref#1{equation~\ref{#1}}

\def\1{\bm{1}}

\DeclareMathAlphabet{\mathsfit}{\encodingdefault}{\sfdefault}{m}{sl}
\SetMathAlphabet{\mathsfit}{bold}{\encodingdefault}{\sfdefault}{bx}{n}

\newcommand{\R}{\mathbb{R}}

\newcommand{\KL}{D_{\mathrm{KL}}}

\DeclareMathOperator*{\argmax}{arg\,max}
\DeclareMathOperator*{\argmin}{arg\,min}

\theoremstyle{plain}
\newtheorem{theorem}{Theorem}

\newaliascnt{lemma}{theorem}
\newtheorem{lemma}[lemma]{Lemma}
\aliascntresetthe{lemma}

\newaliascnt{proposition}{theorem}
\newtheorem{proposition}[proposition]{Proposition}
\aliascntresetthe{proposition}

\newaliascnt{corollary}{theorem}

\newtheorem{assumption}{Assumption}
\aliascntresetthe{corollary}

\crefname{theorem}{Theorem}{Theorems}
\crefname{lemma}{Lemma}{Lemmas}
\crefname{proposition}{Proposition}{Propositions}
\crefname{corollary}{Corollary}{Corollaries}
\crefname{definition}{Definition}{Definitions}
\crefname{remark}{Remark}{Remarks}
\crefname{assumption}{Assumption}{Assumptions}

\newcommand{\bmu}{\bm{\mu}}

\newcommand{\PP}{\mathbb{P}}

\newcommand{\Mc}{\mathcal{M}}

\newcommand{\Oc}{\mathcal{O}}
\newcommand{\Sc}{\mathcal{S}}

\floatname{algorithm}{Algorithm}
\renewcommand{\algorithmicrequire}{\textbf{Input:}}
\renewcommand{\algorithmicensure}{\textbf{Output:}}

\makeatletter
\renewcommand{\eqref}[1]{\textup{\tagform@{\ref{#1}}}}
\makeatother

\title{Segmenting Human--LLM Co-authored Text via Change Point Detection}

\author{%
Mengchu~Li$^*$ \\
School of Mathematics \\
University of Birmingham \\
Birmingham, UK \\
\texttt{m.li.15@bham.ac.uk}
\And
Jin~Zhu\thanks{Mengchu~Li and Jin~Zhu contributed equally to this paper and are listed in the alphabetical order.} \\
School of Mathematics \\
University of Birmingham \\
Birmingham, UK \\
\texttt{j.zhu.7@bham.ac.uk}
\And
Jinglai~Li \\
School of Mathematics \\
University of Birmingham \\
Birmingham, UK \\
\texttt{j.li.10@bham.ac.uk}
\And
Chengchun Shi\thanks{Corresponding authors} \\
Department of Statistics \\
London School of Economics and Political Science  \\
London, UK \\
\texttt{c.shi7@lse.ac.uk}
}

\begin{document}

\maketitle

\begin{abstract}
The rise of large language models (LLMs) has created an urgent need to distinguish between human-written and LLM-generated text to ensure authenticity and societal trust. Existing detectors typically provide a binary classification for an entire passage; however, this is insufficient for human--LLM co-authored text, where the objective is to localize specific segments authored by humans or LLMs. To bridge this gap, we propose algorithms to segment text into human- and LLM-authored pieces. Our key observation is that such a segmentation task is conceptually similar to classical change point detection in time-series analysis. Leveraging this analogy, we adapt change point detection to LLM-generated text detection, develop a weighted algorithm and a generalized algorithm to accommodate heterogeneous detection score variability, and establish the minimax optimality of our procedure. Empirically, we demonstrate the strong performance of our approach against a wide range of existing baselines. The python implementation of our proposal is available at \url{https://github.com/Mamba413/DetectLLMSegmentation}.
\end{abstract}


\section{Introduction}
\vspace{-7pt}
State-of-the-art large language models (LLMs), including GPT-5 \citep{singh2025openai}, Gemini \citep{comanici2025gemini}, and Grok-4 \citep{grok2025}, exhibit strong capabilities in following human instructions, performing complex reasoning, and generating text at scale. These powerful tools have been deeply integrated into both professional and daily workflows, rendering LLM-generated content pervasive across various domains, such as academic literature, technical reports, and student assignments. This growth has spurred a line of research on LLM-generated text detection (see Appendix \ref{sec:app-literature} for a review). 

These works study detection as a binary classification problem, determining whether an entire passage is human- or LLM-authored. However, in practice, LLM-generated text is rarely used without modification. Users often revise, edit, or blend it with their own writing. For instance, in technical reports, users may draft the core methodological sections themselves while relying on LLMs to generate descriptive text for tables and figures. In creative writing, authors may collaboratively generate stories with LLMs \citep{xie2023next}. Such human--LLM co-authoring settings have become increasingly common as LLMs continue to evolve.

In these ``hybrid'' settings, a binary classification of an entire passage as human- or LLM-authored is insufficient. A more meaningful question is whether we can segment the document into human- and LLM-authored pieces to identify which portions are generated by LLMs. However, as discussed below, only a few works have considered LLM-generated text localization.

This paper advances the state of LLM text detection by making the following \textbf{contributions}:\vspace{-0.5em}
\begin{itemize}[leftmargin=*]
\item We identify that the problem of segmenting human--LLM co-authored text is conceptually closely connected to \ul{\textit{change point detection}} problems in time-series analysis. This perspective motivates a new framework that adapts change point detection for LLM-generated text localization. 
\item We propose three change point detection algorithms tailored to human--LLM text segmentation. We begin with a \ul{\textit{vanilla}} adaptation of classical change point detection and reveal its limitations in settings with heterogeneous detection score variability. To overcome this limitation, we develop a \ul{\textit{weighted}} algorithm and a \ul{\textit{generalized}} algorithm that encodes sentence-specific information and substantially improve segmentation accuracy.
\item We derive \ul{\textit{estimation error bounds}} for both vanilla and weighted algorithms (Theorems~\ref{thm:cusum} and~\ref{thm:cusum-hetero}), under the assumption that each contiguous segment is  either human-written or LLM-generated. These results show that the latter achieves more accurate estimation under less restrictive signal-to-noise ratio conditions. Together with a \ul{\textit{minimax lower bound}} that characterizes the intrinsic difficulty of the segmentation problem (Theorem~\ref{lemma:lowerbound}), we demonstrate that the weighted algorithm achieves this limit, establishing its \ul{\textit{minimax optimality}}. We further extend such optimality guarantees to the generalized algorithm under appropriate conditions. 
\item We evaluate the empirical performance of the proposed algorithms across a broad range of experimental settings (Section~\ref{sec:exp} and Appendix~\ref{sec:addition-numerical}). Results demonstrate that our proposal consistently outperforms existing baseline algorithms.
\end{itemize} 

\textbf{Literature review}. Our proposal is most closely related to the literature on LLM-generated text localization. In practical settings, users rarely adopt LLM-generated text directly. Instead, they revise or edit the generated content \citep{zhang2024llm}. This collaborative writing process is commonly referred to as \ul{\emph{human--AI co-authoring}}. LLM-generated text localization aims to localize the portions of a document that originate from an LLM. \citet{dugan2023bounary} empirically evaluate humans' ability to detect the boundary between human- and LLM-authored text. A line of recent research applies existing detectors at the sentence level to localize LLM-authored sentences \citep{wang2023seqxgpt, zeng2024aisentence, jiang2025sendetex}. In contrast, \citet{su2025haco} adopt a bottom-up approach, performing token-level detection and aggregating these classifications to determine whether a sentence is authored by a human or an LLM. In addition to these works, \citet{zhang2024machine} demonstrate that pooling information across several consecutive sentences improves performance over individual sentence-level classification. \citet{li2024segmenting} and \citet{bonnerjee2025fast} combine change point detection algorithms with watermarking techniques to obtain text segmentation. 

\section{Preliminaries}\label{sec:preliminary}
\vspace{-7pt}
\textbf{Detection statistics for LLM-generated text}. 
Our procedure relies on certain detection statistics to distinguish between LLM- and human-authored text, which we first introduce in this section. While allowing for general detection statistics, we focus, for illustration, on the zero-shot FastDetectGPT statistic~\citep{bao2024fastdetectgpt}, which leverages next-token prediction probabilities from a pretrained language model. Given a text segment $\bm{Z}$ containing $n$ tokens, let \( \bm{Z}_{< t} \coloneqq (Z_1, Z_2, \ldots, Z_{t-1}) \). Consider the following detection statistic:
\begin{align*}
\phi(\bm{Z})=\frac{1}{n}\sum_{t=1}^n \bigg\{\log \texttt{score}(Z_t | \bm{Z}_{<t})
- \mathbb{E}_{Z'_t \sim \texttt{sample}(\bullet | \bm{Z}_{<t})}
\big[\log \texttt{score}(Z'_t | \bm{Z}_{<t})\big]\bigg\},
\end{align*}
where \(\texttt{score}(Z_t | \bm{Z}_{<t})\) denotes the scoring model's logit for token $Z_t$ given $\bm{Z}_{<t}$ and
\(\texttt{sample}(\bullet | \bm{Z}_{<t})\) denotes the sampling model, which is used to sample $Z'_t$ given $\bm{Z}_{<t}$, and which may be different from the scoring model. The intuition behind this statistic is that LLM-generated text tends to yield higher values compared to human-written text \citep[Figure 1]{bao2024fastdetectgpt}. We also note that the variance of $\phi(\bm{Z})$ when $\bm{Z}$ is generated from the sampling model can be approximated by
$\hat{\sigma}^2(\bm{Z}) = n^{-2}\sum_t \textrm{Var}_{Z'_t \sim \texttt{sample}(\bullet | \bm{Z}_{<t})}
\big[\log \texttt{score}(Z'_t | \bm{Z}_{<t})\big]$.
It is clear from the formulation that as the length of the text $n$ increases, the variability of $\phi(\bm{Z})$ tends to decrease and therefore becomes more reliable. 
Further developments based on FastDetectGPT utilize training data to improve performance \citep[e.g.][]{zhou2025adadetectgpt,zhou2026detecting}, and we consider those ML-based variants in our experiments as well.

\textbf{CUSUM and NOT for change point detection}. We next review the CUSUM statistic and the narrowest-over-thresholding \citep[NOT,][]{baranowski2019narrowest} algorithm for classical change point detection, upon which our proposal is built. 
Let $\bm{Y} = (Y_1, \dotsc, Y_N)$ denote a time series of $N$ observations indexed sequentially. The goal of change point detection is to identify structural changes within this sequence. For example, letting $\mu_i = \mathbb{E}(Y_i)$ represent the expected value of each observation, we seek to determine the locations $t$ where $\mu_t \neq \mu_{t+1}$. Throughout the rest of the paper, we let $[N]$ denote the set of integers $\{1,\dotsc,N\}$ for $N\in \mathbb{Z}_{+}$. 

The CUSUM statistic plays a crucial role in these algorithms. For any time points $\{s,t,e\} \subset [N]$ such that $s \leq t< e$, the CUSUM statistic at $t$ over the interval $[s, e]$ is defined as
\begin{equation}\label{eq:CUSUM}
C_{s,e}^{\bm{Y}}(t)  = \sqrt{\frac{S_{s:t}S_{(t+1):e}}{S_{s:e}}} |\overline{Y}_{s:t}-\overline{Y}_{(t+1):e}|,
\end{equation}
where $\overline{Y}_{t_1:t_2}$ denotes the sample average of observations $\{Y_t\}_{t}$ in the interval $[t_1,t_2]$ and $S_{t_1:t_2} = t_2-t_1+1$ denotes the number of samples within $[t_1,t_2]$. 

It can be shown that $\max_{s\leq b<e}|C_{s,e}^{\bm{Y}}(b)|$ is the generalized likelihood ratio statistic for testing whether there is a change in the sequence $\{\mu_t\}_t$ under Gaussian assumptions \citep[e.g.,][]{baranowski2019narrowest, wang2020univariate}. Beyond the Gaussian case, the applicability of this maximal-type statistic extends to various data distributions as well. It has served as a cornerstone for a wide range of change point detection algorithms with both strong theoretical guarantees and promising empirical performance \citep[e.g.,][]{liu2021minimax,padilla2021optimal,wang2021optimal,li2022network}. 

Of particular interest is the NOT algorithm \citep{baranowski2019narrowest}, which we adapt for our task of LLM text localization. In \Cref{alg:not_algorithm}, we detail a meta-algorithm that slightly differs from the original NOT algorithm. This version allows two arbitrary statistics, $A^{\bm{Y}}_{s,e}(b)$ and $B(s,e)$, as inputs, which can be tailored to our needs. Note that with the default choice $A^{\bm{Y}}_{s,e}(b) = C^{\bm{Y}}_{s,e}(b), \,B(s,e) = e-s$, \Cref{alg:not_algorithm} reduces to the original algorithm.

\begin{algorithm}[t]
\caption{NOT-meta algorithm}\label{alg:not_algorithm}
\begin{algorithmic}[1]
\REQUIRE (i) Data vector $\bm{Y}=(Y_{1},\ldots, Y_{N})$, (ii) Threshold parameter $r$; (iii) Number of random intervals $M$; (iv) Statistics $A^{\bm{Y}}_{s,e}(b), B(s,e)$
\ENSURE Set of estimated change points $\Sc \subset\{1,\ldots,N\}$. 
\STATE Let $e \leftarrow N$, $s \leftarrow 1$ and $\Sc \leftarrow \emptyset$, apply \textbf{NOT-meta}$(s, e, r)$ described in Steps 2--15. 
\IF{$e-s < 1$} 
\STATE STOP
\ELSE
\STATE Uniformly randomly draw $M$ intervals within $[s, e]$ and form an interval set $\Mc$ 
\STATE $\Oc \leftarrow \left\{m\in\Mc: \max\limits_{s_m \leq b < e_m} A_{s_m,e_m}^{\bm{Y}}(b)  >  r\right\}$
\IF{$\Oc=\emptyset$} 
\STATE STOP
\ELSE
\STATE $m^{*} \leftarrow \argmin\limits_{m\in\Oc}B(s_m,e_m)$, $b^{*} \leftarrow \argmax\limits_{s_{m^*} \le b < e_{m^*}} A_{s_{m^*},e_{m^*}}^{\bm{Y}}(b)$, $\Sc \leftarrow \Sc\cup\{b^{*}\}$
\STATE \textbf{NOT-meta}($s,b^{*},r$)
\STATE \textbf{NOT-meta}($b^{*}+1,e,r$)
\ENDIF
\ENDIF
\end{algorithmic}
\end{algorithm}

\textbf{Problem setup}. Finally, we detail our problem setup. Consider a paragraph or an article $\bm{X}$ partitioned into $N$ parts $(X_1,\dotsc,X_N)$, each authored by either a human or an LLM. Each $X_t$ can be defined at different levels of granularity, such as paragraphs, sentences or tokens, and our framework applies to all these settings. Our goal is to segment $\bm{X}$ into human-written and LLM-generated pieces. A na{\"i}ve approach is to apply a score function $\phi$, either a zero-shot statistical measure or an ML classifier, to assess whether each $X_t$ is generated by an LLM. For instance, $X_t$ is classified as LLM-authored if $\phi(X_t)$ exceeds a certain threshold. With these classification labels at hand, we directly obtain a segmentation of the entire paragraph.

However, such an approach suffers from two limitations: (i) The optimal classification threshold is difficult to determine a priori, and the performance of the resulting algorithm can be highly sensitive to this choice. (ii) As discussed in the literature review, pooling information across consecutive $X_i$'s improves detection accuracy. However, the na{\"i}ve approach evaluates each $X_i$ individually, leading to suboptimal segmentation.

To address these limitations, we frame LLM text localization as a change point detection problem. We define change points as the indices where authorship transitions between human and LLM. Formally, let $0 = \tau_0 < \tau_1 < \dots < \tau_K < \tau_{K+1} = N$ denote $K$ unknown change points.  
These points partition the paragraph into $K+1$ segments, such that each segment
\begin{align*}
X_{(\tau_i+1):\tau_{i+1}} = (X_{\tau_i+1}, X_{\tau_i+2}, \dots, X_{\tau_{i+1}})
\end{align*}
is authored entirely by either a human or an LLM for every $i \in \{0\}\cup[K]$. 
Under this formulation, localizing human- or LLM-authored sentences is equivalent to identifying the set of change points $\{\tau_k\}_{k=1}^K$. This enables us to leverage classical change point detection for LLM text localization, as detailed in the next section. 

\section{Methodology: From change point detection to LLM text localization}\label{sec:method}
\vspace{-7pt}
This section details our proposal for adapting change point detection to LLM-generated text localization. We begin by introducing a vanilla adaptation of the NOT algorithm (see Algorithm~\ref{alg:not_algorithm}) and discussing its limitations. To address these limitations, we next propose a weighted algorithm and a generalized algorithm (deferred to Appendix~\ref{sec:textCP}). Finally, we detail the procedure for localizing human--LLM text based on the resulting change point estimates.

\textbf{Vanilla change point detection}. 
Having introduced the CUSUM statistic and the NOT algorithm that builds upon it, a natural approach to segmenting human--LLM co-authored text is to transform each $X_t$ in $\bm{X}$ into a scalar $\phi(X_t)$ using an existing detection score function $\phi$, such as the FastDetectGPT statistic. This transformation yields a one-dimensional time series $(\phi(X_1),\dotsc, \phi(X_N))$, enabling the application of standard change point detection, such as the default NOT algorithm discussed in \Cref{sec:preliminary}, to identify all change points $\tau_i$. We refer to this approach as \textbf{VCP}, short for \underline{V}anilla adaptation of \underline{C}hange \underline{P}oint detection.

\begin{wrapfigure}{r}{0.5\linewidth}
\centering
\vspace*{-12pt}
\includegraphics[width=1.0\linewidth]{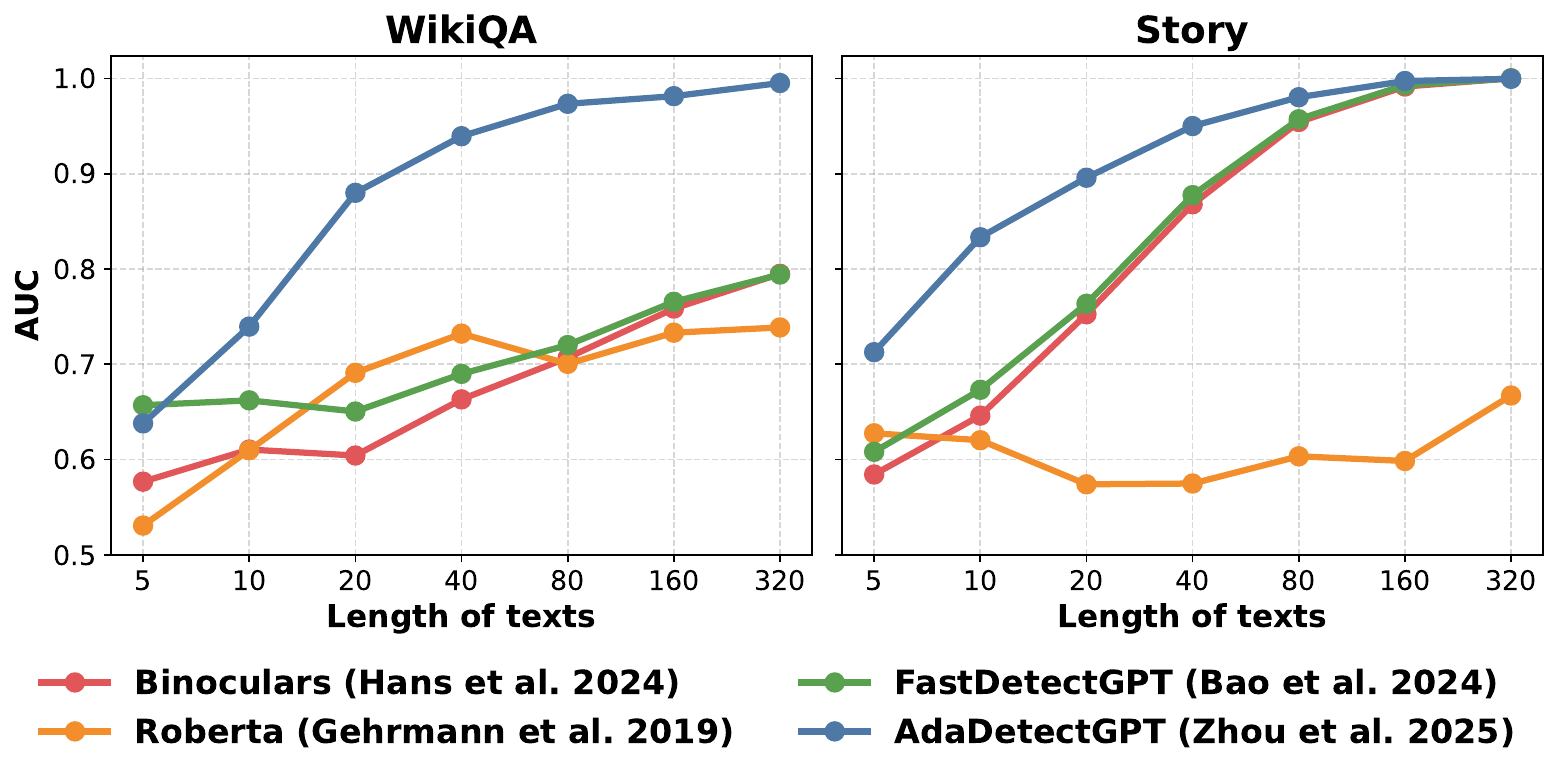}
\vspace*{-15pt}
\caption{\small AUCs of various detectors on the WikiQA and Story datasets with varying lengths of input texts. RoBERTa and AdaDetectGPT are two ML-based LLM detectors, while Binoculars and FastDetectGPT are two zero-shot detectors.}
\label{fig:length-vs-auc}
\vspace*{-1em}
\end{wrapfigure}
However, directly applying the default NOT algorithm or other off-the-shelf change point detection algorithms is often suboptimal. In sentence-level detection where each \(X_t\) is a sentence, VCP fails to account for the heterogeneous signal strength in $\phi(X_i)$ often resulting from varying sentence lengths. 
Specifically, both theoretical results \citep{zhou2025adadetectgpt} and empirical findings \citep{bao2024fastdetectgpt} suggest that detection accuracy increases with the length of the text. This relationship is further illustrated in Figure~\ref{fig:length-vs-auc}, which reports the area under the curve (AUC) for both ML-based LLM detectors and zero-shot LLM detectors across varying text lengths. The results show that detection accuracy generally increases with the input length. Consequently, change points are easier to detect when they occur near longer sentences and harder to detect near shorter sentences. 

Even if each $X_i$ has the same number of tokens, the variability of the detection score $\phi(X_i)$ can still be heterogeneous, making some segments $X_i$ easier to detect than others. Despite this, VCP treats the level of difficulty equally across the time series, which leads to inefficient change point detection. Mathematically, this limitation stems from the use of standard CUSUM statistics in \eqref{eq:CUSUM}, which treats each observation uniformly.

Our proposed methodology, detailed in the following section, explicitly scales the contribution of each score $\phi(X_i)$ in the CUSUM statistic according to its variability, measured by either a built-in estimate or sentence length. This scaling leads to more efficient and, in theory, \ul{\textit{minimax optimal}} segmentation for human--LLM co-authored text (see \Cref{sec:theory}).

\textbf{Weighted change point detection}. 
In this section, we introduce \textbf{WCP}, a \underline{W}eighted \underline{C}hange \underline{P}oint detection algorithm, to overcome the limitation of VCP. As discussed earlier, \ul{\textit{not all sentences and segments are created equal}} for authorship identification. Rather, the difficulty of detection is inherently tied to the variability of their detection score. WCP exploits this observation by assigning larger weights to more informative sentence-level detection scores and prioritizing these observations for more accurate change point detection.

Specifically, for a generic time series $\bm{Y} = (Y_1,\dotsc,Y_N)$ and a weight vector $w = (w_1,\dotsc,w_N)$, the algorithm relies on the following weighted CUSUM statistic, defined as:
\begin{equation}\label{eq:weighted_CUSUM}
W_{s,e}^{\bm{Y}}(t) = \sqrt{\frac{S_{s:t}^w \,S_{(t+1):e}^w}{S_{s:e}^w}}|\overline{Y}_{s:t}^w
- \overline{Y}_{(t+1):e}^w|,
\end{equation}
where $S_{t_1:t_2}^w = \sum_{i=t_1}^{t_2} w_i$ represents the cumulative weight, and $ \overline{Y}_{t_1:t_2}^{w} = (S_{t_1:t_2}^w)^{-1}\sum_{i=t_1}^{t_2}w_iY_i$
denotes the weighted average of the observations over the interval $[t_1, t_2]$. By construction, a relatively larger weight $w_i$ amplifies the impact of observation $Y_i$ on the statistic. When $w_i=1$ for all $i \in [N]$, the weighted CUSUM is reduced to the standard CUSUM statistic in \eqref{eq:CUSUM}. 

WCP applies the meta-algorithm (\Cref{alg:not_algorithm}) with $A_{s,e}^{\bm{Y}}(b) = W_{s,e}^{\bm{Y}}(b)$ and $B(s,e) = S^w_{s:e}$ to the score sequence $(\phi(X_1),\cdots,\phi(X_N))$ for change point detection. The choice of the weight vector ${w}$ is critical, as it encodes instance-specific information from $\phi(X_i)$ into the segmentation. In our theoretical analysis, we show that setting $w_i$ to the inverse variance of each score $\phi(X_i)$ outperforms the vanilla algorithm and achieves minimax optimal segmentation. In our implementation, we may use the variance estimator described in Section \ref{sec:preliminary} as the weights and set $w_i = \{\hat{\sigma}(X_i)\}^{-2}$. For sentence-level detection, we may alternatively use sentence length as a proxy by setting $w_i = n_i^{\kappa}$, where $n_i$ denotes the number of tokens in sentence $X_i$ and $\kappa > 0$ is a hyperparameter (typically set to $1$ or $2$).

\textbf{Human-LLM text localization}. After obtaining the estimated change points $\widehat{\tau}_1 < \dotsc < \widehat{\tau}_{\widehat{K}}$, 
we could divide the whole passage into $\widehat{K}+1$ segments 
\[
X_{1:\widehat{\tau}_1},\; X_{\widehat{\tau}_1+1:\widehat{\tau}_2},\; \ldots,\; X_{(\widehat{\tau}_{\widehat{K}-1}+1): \widehat{\tau}_{\widehat{K}}},\; X_{(\widehat{\tau}_{\widehat{K}}+1):N},
\]
where $\widehat{K}$ denotes our estimated number of change points. We next apply $\phi$ to these segments to calculate their detection scores $\{\phi(X_{1:\widehat{\tau}_1}),\phi(X_{\widehat{\tau}_1+1:\widehat{\tau}_2}),\cdots,\phi(X_{(\widehat{\tau}_{\widehat{K}}+1):N})\}$, and employ a clustering algorithm (e.g., $k$-means with $k=2$) to group these segment-wise scores into two clusters. Given our choice of $\phi$ (e.g.\ the FastDetectGPT statistic) where higher values indicate a larger probability of LLM generation, the cluster with the larger mean is classified as LLM-authored, while the other cluster is identified as human-authored. If no change point is detected, we directly apply $\phi$ to the whole document to classify the entire passage.

\section{Theory}\label{sec:theory}
\vspace{-7pt}
In this section, we develop theoretical guarantees for the proposed algorithms in \Cref{sec:method}.
We begin with a summary of our main findings: \vspace{-0.5em}
\begin{itemize}[leftmargin=*]
\item \textit{\textbf{Non-asymptotic error bound}}: Theorems~\ref{thm:cusum} and~\ref{thm:cusum-hetero} establish non-asymptotic bounds on the localization error (i.e., distance between the estimated change points and their oracle locations) for VCP and WCP, respectively. Crucially, \ul{\textit{VCP's error bound is governed by the maximal variance of the sentence-level scores, whereas the bound for WCP depends on their harmonic mean}}. Since the harmonic mean is strictly bounded by the maximum, these results formally demonstrate that \ul{\textit{WCP outperforms VCP particularly when the detection scores have heterogeneous variances. In the homogeneous case, the two algorithms achieve equivalent performance}}.  
\item \textit{\textbf{Minimax optimality}}: Theorem~\ref{lemma:lowerbound} derives a minimax lower bound for the change point estimation error. Notably, this lower bound matches the upper bound achieved by WCP. Together with Theorem~\ref{thm:cusum-hetero}, this establishes the \ul{\textit{minimax optimality of WCP}}. 
\vspace{-0.5em}
\end{itemize}

\textbf{Non-asymptotic error bound}. Suppose there are $K$ change points $\{\tau_j\}_{j\in [K]}$ in a paragraph $\bm{X}=(X_1,\dotsc,X_N)$. Let $\Delta_1 = \min_{j\in[K+1]}\{\tau_j-\tau_{j-1}\}$ denote the minimum gap between these change points. Furthermore, let $\mu_h = \mathbb{E}[\phi(X_h)]$ and $\mu_m = \mathbb{E}[\phi(X_m)]$ denote the expected scores for a human-authored sentence $X_h$ and an LLM-authored sentence $X_m$, respectively. Let $\sigma_i$ denote the sub-Gaussian parameter of $\phi(X_i)$, defined in \eqref{eq:subG}; we simply refer to $\sigma_i^2$ as the variance proxy of $\phi(X_i)$ in the main text. The following theorem upper bounds the localization error of VCP.
\begin{theorem}[Error bound for VCP]\label{thm:cusum} Under the assumptions and appropriate choice of tuning parameters specified in Appendix~\ref{sec:asp}, if 
there exist some absolute constant $c>0$ and some $0<\delta<1$ such that
\begin{equation}\label{eq:snr-thm-1}
(\mu_m-\mu_h)^2 \Delta_1 \ge c \sigma_{\max}^2\log(N/\delta),
\end{equation}
where $\sigma_{\max}^2 = \max_{i\in [N]}\sigma_i^2$,
then with probability at least $1-\delta$, the outputs of VCP $\{\widehat{\tau}_j\}_{j\in [\widehat{K}]}$ and $\widehat{K}$ satisfy that
\begin{equation}\label{eq:thm1-localisation}
\widehat{K} = K\quad  \text{and }\,\max_{j\in [K]}|\widehat{\tau}_j - \tau_j| =O\Big( \frac{\sigma_{\max}^2\log(N/\delta)}{(\mu_m-\mu_h)^2}\Big).
\end{equation}
\end{theorem}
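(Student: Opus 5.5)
The plan is to follow the standard NOT analysis of \citet{baranowski2019narrowest} (see also \citet{wang2020univariate}), with the sub-Gaussian parameters $\sigma_i$ replaced throughout by $\sigma_{\max}$. Write $Y_i=\phi(X_i)=f_i+\varepsilon_i$, where $f_i\in\{\mu_h,\mu_m\}$ is piecewise constant with jumps of size $\kappa:=|\mu_m-\mu_h|$ at the $\tau_j$, and the $\varepsilon_i$ are independent, centred and sub-Gaussian with parameter $\sigma_i\le\sigma_{\max}$. Set $\lambda=C\sigma_{\max}\sqrt{\log(N/\delta)}$.

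\textbf{Step 1: a good event.} Define
\[
\mathcal{A}=\Big\{\max_{1\le s\le t<e\le N}\Big|\sqrt{\tfrac{(t-s+1)(e-t)}{e-s+1}}\big(\overline{\varepsilon}_{s:t}-\overline{\varepsilon}_{(t+1):e}\big)\Big|\le\lambda\Big\}.
\]
Each term is a linear combination of the $\varepsilon_i$ with coefficient vector of unit $\ell_2$ norm, so it is sub-Gaussian with parameter at most $\sigma_{\max}$. A union bound over the at most $N^3$ triples then gives $\PP(\mathcal{A})\ge1-\delta/2$. In the same way, with probability $1-\delta/2$, enough of the $M$ random intervals are well placed: for every $j$, some drawn interval contains $\tau_j$ and no other change point, and has both endpoints within distance $\Delta_1/3$ of $\tau_j$, lying on either side of it. This needs $M\gtrsim(N/\Delta_1)^2\log(N/\delta)$, which I take to be part of the tuning conditions in Appendix~\ref{sec:asp}. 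On $\mathcal{A}$, $|C^{\bm Y}_{s,e}(t)-C^{f}_{s,e}(t)|\le\lambda$ uniformly. Everything from here on is deterministic on the intersection of these two events.

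\textbf{Step 2: signal computations.} There are three facts about the noiseless CUSUM to establish.
\begin{enumerate}[label=(\alph*)]
\item On an interval with no change point, $C^f\equiv0$.
\item On an interval $[s,e]$ containing exactly one change point $\tau$, with $\min(\tau-s+1,e-\tau)\ge\Delta_1/3$, one has $\max_b C^f_{s,e}(b)=C^f_{s,e}(\tau)\gtrsim\kappa\sqrt{\Delta_1}$.
\item In the same single-change-point situation, $C^f_{s,e}(\tau)-C^f_{s,e}(b)\gtrsim\kappa\min(|b-\tau|,\sqrt{\Delta_1|b-\tau|})/\sqrt{\Delta_1}$. This is the familiar concavity-type decay of the one-change CUSUM, as in Lemma 2.6 of \citet{venkatraman1992consistency} and \citet{wang2020univariate}.
\end{enumerate}

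Choose the threshold $r$ with $\lambda<r<c'\kappa\sqrt{\Delta_1}-\lambda$. This interval is nonempty exactly under the signal-to-noise condition \eqref{eq:snr-thm-1} with $c$ large enough.

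\textbf{Step 3: the induction over recursive calls.} Suppose the current segment $[s,e]$ satisfies one of two conditions: either every true change point in it lies at distance at least $\Delta_1/3$ from both ends, or the segment contains no change point. In the second case, (a) together with $r>\lambda$ forces STOP. In the first case, at least one well-placed interval exceeds the threshold, by (b) and Step 1. Conversely, any interval in $\Oc$ must contain a change point, by (a). The main obstacle is the narrowest-selection argument. I must show that the narrowest interval in $\Oc$ contains exactly one change point, with enough room on each side of it. The standard argument is as follows: if the interval contained two change points, it would have length at least $\Delta_1$, which exceeds the length of the well-placed intervals, so it could not be the narrowest. If instead its single change point were too close to an endpoint, its CUSUM would be at most of order $\kappa\sqrt{\text{(short side)}}$, and exceeding $r$ forces the short side to be at least of order $\lambda^2/\kappa^2$. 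Once that is settled, I compare $C^{\bm Y}(b^*)\ge C^{\bm Y}(\tau)$ and apply (c) and $\mathcal{A}$. Since (c) is linear in $|b^*-\tau|$ near $\tau$, comparing it with the noise gives $\kappa|b^*-\tau|\lesssim\lambda\sqrt{|b^*-\tau|}$, hence $|b^*-\tau|\lesssim\sigma_{\max}^2\log(N/\delta)/\kappa^2$. A more careful version uses a localized noise bound over windows near $\tau$. By \eqref{eq:snr-thm-1}, this error is at most $\Delta_1/3$ when $c$ is large. The two child segments therefore again satisfy the induction hypothesis, after redrawing the well-placed intervals within them; this is covered by the probability bound of Step 1. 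Since each change point is found exactly once and no spurious ones are produced, we get $\widehat K=K$ together with the localization bound \eqref{eq:thm1-localisation}.
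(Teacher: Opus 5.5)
\textbf{The gap is in the localization step, which is where the claimed rate comes from.}

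Your only noise event is the uniform bound $\mathcal{A}$ (the paper's event $A$). On $\mathcal{A}$, the comparison $C^{\bm{Y}}_{s,e}(b^*)\ge C^{\bm{Y}}_{s,e}(\tau)$ yields only $C^{f}_{s,e}(\tau)-C^{f}_{s,e}(b^*)\le 2\lambda$. Combined with (c), this gives $|b^*-\tau|\lesssim \lambda\sqrt{\Delta_1}/\kappa=\sigma_{\max}\sqrt{\Delta_1\log(N/\delta)}/\kappa$. That exceeds the target $\sigma_{\max}^2\log(N/\delta)/\kappa^2$ by the factor $\kappa\sqrt{\Delta_1}/\lambda$, which is unbounded under \eqref{eq:snr-thm-1}.

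The inequality $\kappa|b^*-\tau|\lesssim\lambda\sqrt{|b^*-\tau|}$ that you write does not follow from $\mathcal{A}$ and (c). It requires the noise term $\langle\bm{\varepsilon},\psi^{\tau}_{s,e}-\psi^{b}_{s,e}\rangle$, after aligning the signs of the two contrasts, to be of order $\lambda\|\psi^{\tau}_{s,e}-\psi^{b}_{s,e}\|_2\lesssim\lambda\sqrt{|b-\tau|/L}$, with $L$ the shorter side of $[s,e]$, rather than $\lambda$. That is a second high-probability event, uniform over all $(j,s,e,b)$, which you neither define nor bound. Your remark about ``a localized noise bound'' names the missing ingredient but does not supply it, and it contradicts your statement that everything is deterministic on your two events.

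The paper supplies exactly this ingredient as event $B$ in the proof of \Cref{thm:cusum-hetero}. It expands $(W^{\bm{Y}}_{s,e}(\tau_j))^2-(W^{\bm{Y}}_{s,e}(b))^2$ into three pieces: the signal gap $D=(W^{\bmu}_{s,e}(\tau_j))^2-(W^{\bmu}_{s,e}(b))^2$; a difference of noise squares of size at most $r^2$, by $A$; and a cross term $-2\langle\bm{\epsilon},\bmu_{\psi}\rangle_w$. By \eqref{eq:identity}, $\|\bmu_{\psi}\|_w^2=D$, so $B$ bounds the cross term by $2r\sqrt{D}$. For $b\ge\tau_j$, $D\ge\tfrac12\kappa^2\min\{S^w_{s:\tau_j},S^w_{(\tau_j+1):b}\}$, and the selected interval has $S^w_{s:\tau_j}\gtrsim r^2/\kappa^2$. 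Together these force $S^w_{(\tau_j+1):\widehat{b}}\lesssim r^2/\kappa^2$ (symmetrically for $\widehat b<\tau_j$), with no sign alignment needed. \Cref{thm:cusum} then follows by taking $w_i\equiv\sigma_{\max}^{-2}$, for which $W=C/\sigma_{\max}$ and $A,B$ still hold. A direct unweighted proof like yours is fine, but it needs the analogue of $B$.

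\textbf{Two further steps also need repair.}

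First, (c) is stated for intervals whose two sides are both at least $\Delta_1/3$. The interval chosen by the narrowest-over-threshold rule is only known to have its shorter side $\gtrsim\lambda^2/\kappa^2$, as you note yourself, so (c) does not apply to it. The localization must be done in terms of the selected interval's own side lengths; the squared argument above uses only $S^w_{s:\tau_j}\gtrsim r^2/\kappa^2$. Similarly, for (b) to apply to the well-placed intervals, their endpoints must also be bounded away from $\tau_j$, not merely within $\Delta_1/3$ of it. The paper's event $C$ does this by taking endpoints in $(\tau_j-\Delta_1/3,\tau_j-\Delta_1/6)$ and $(\tau_j+\Delta_1/6,\tau_j+\Delta_1/3)$.

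Second, your induction hypothesis is not preserved by the recursion. Unless $\widehat{\tau}_j=\tau_j$, one child segment still contains $\tau_j$, at distance $|\widehat{\tau}_j-\tau_j|$ from its boundary. That child is neither free of change points nor has all of them $\Delta_1/3$ away from its ends. The hypothesis has to allow change points within $C_1\lambda^2/\kappa^2$ of an endpoint, and you must show such a change point cannot push any drawn interval over the threshold. This needs $r$ to be a sufficiently large multiple of $\lambda$ (roughly $r>(1+\sqrt{C_1})\lambda$), not just $r>\lambda$. The same condition is what makes your short-side bound $(r-\lambda)^2/\kappa^2\gtrsim\lambda^2/\kappa^2$ valid (cf.\ Step Five and Lemma~3 of \citet{baranowski2019narrowest}).
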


\Cref{thm:cusum} shows that with high probability, VCP recovers the true number of change points and upper bounds the localization error as specified in \eqref{eq:thm1-localisation}. However, the upper bound depends on the maximal variance $\sigma_{\max}^2$, which is quite pessimistic. In practice, if even a single score $\phi(X_i)$ exhibits high variance (for example, from a short sentence in sentence-level detection), the resulting bound becomes considerably large. This suggests that VCP's theoretical performance is limited by the least reliable sentence classification in the paragraph. 

The condition in \eqref{eq:snr-thm-1} is called the signal-to-noise ratio (SNR) condition, which requires that the detectable signal, represented by the left-hand side (LHS) of \eqref{eq:snr-thm-1}, dominates the noise level on the right-hand side (RHS). The signal strength in this context is a product of two components: (a) $(\mu_m-\mu_h)^2$, which measures how well $\phi$ can distinguish human- and LLM-authored text, and (b) $\Delta_1$, the minimal distance between two consecutive change points. 

We next demonstrate how WCP improves both the signal-to-noise condition \eqref{eq:snr-thm-1} and the error bound \eqref{eq:thm1-localisation} in the following theorem. 

\begin{theorem}[Error bound for WCP]\label{thm:cusum-hetero} 
Under the same assumptions specified in Appendix~\ref{sec:asp}, together with the signal-to-noise condition in \eqref{eq:snr-thm-1}, if we choose $w_i = \sigma_i^{-2}$ and $B(s,e) = e-s$ in the WCP algorithm, its outputs, $\{\widehat{\tau}_j\}_{j\in [\widehat{K}]}$ and $\widehat{K}$ satisfy that, with probability at least $1-\delta$,
\begin{equation}\label{eq:weighted-localisation}
\widehat{K} = K\quad \text{and } \,
\sum_{i\in (\tau_j,\widehat{\tau}_j]\cup (\widehat{\tau}_j,\tau_j]}\frac{1}{\sigma_i^2}=O\Big( \frac{\log(N/\delta)}{(\mu_m-\mu_h)^2}\Big), \; \forall j.
\end{equation}
Moreover, let $\Delta_2 = \min_{j\in[K+1]}\sum_{i = \tau_{j-1}+1}^{\tau_j}\sigma_i^{-2}$. If  $\max_{i\in[N]} \sigma_i^{-2}=O(\Delta_2)$, $M\ge N^2\log(K/\delta)$ and $B(s,e) = S_{s:e}^w$, then the SNR condition in \eqref{eq:snr-thm-1} can be replaced by 
\begin{equation}\label{eq:snr-thm-2}
(\mu_m-\mu_h)^2 \Delta_2 \ge c\log(N/\delta),
\end{equation}
for some absolute constant $c>0$, while the results in \eqref{eq:weighted-localisation} still hold.     
\end{theorem}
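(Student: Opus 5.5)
The plan is to reduce WCP to a homoscedastic NOT analysis by a change of variables. Set $Z_i = w_i^{1/2}\phi(X_i)$ with $w_i=\sigma_i^{-2}$, so that each noise term $\varepsilon_i = \phi(X_i)-\mathbb{E}\phi(X_i)$ rescales to $w_i^{1/2}\varepsilon_i$, which is sub-Gaussian with parameter $1$. The weighted CUSUM in \eqref{eq:weighted_CUSUM} can then be written as an inner product
\[
W^{\bm Y}_{s,e}(t)=\Big|\sum_{i=s}^e \psi^{s,e,t}_i\, w_i^{1/2} Y_i\Big|,
\qquad
\psi^{s,e,t}_i = w_i^{1/2}\Big(\sqrt{\tfrac{S^w_{(t+1):e}}{S^w_{s:t}S^w_{s:e}}}\,\mathbf 1_{i\le t}-\sqrt{\tfrac{S^w_{s:t}}{S^w_{(t+1):e}S^w_{s:e}}}\,\mathbf 1_{i>t}\Big).
\]
The vector $\psi^{s,e,t}$ has unit Euclidean norm and is orthogonal to $(w_i^{1/2})_{i=s}^{e}$. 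This is exactly the structure of the unweighted CUSUM with the counting measure replaced by the measure $w$.

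\textbf{Step 1 (noise event).} Since $\|\psi\|_2=1$, the noise part $\sum_i\psi_i w_i^{1/2}\varepsilon_i$ is $1$-sub-Gaussian for every triple $(s,t,e)$. A union bound over the at most $N^3$ triples gives, with probability $1-\delta/2$, that all such terms are at most $C\sqrt{\log(N/\delta)}$. The same argument controls the weighted partial sums $\sum_{i\in I} w_i\varepsilon_i/\sqrt{S^w_I}$ over all intervals $I$. This is the analogue of the usual event $\mathcal A$, and it is free of $\sigma_{\max}$.

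\textbf{Step 2 (signal calculations in the $w$-metric).} For an interval containing a single true change point $\tau$, the population weighted CUSUM $\widetilde W(t)$ equals
\[
|\mu_m-\mu_h|\sqrt{\frac{S^w_{s:\tau}S^w_{(\tau+1):e}}{S^w_{s:e}}}
\]
at $t=\tau$. Viewed as a function of the cumulative weight $u=S^w_{s:t}$, it has exactly the piecewise form of the classical CUSUM with time replaced by $u$. Hence the standard lemmas carry over verbatim in the variable $u$:
\begin{itemize}[leftmargin=*]
\item the maximizer of $\widetilde W$ is $\tau$;
\item $\widetilde W(\tau)-\widetilde W(t)\gtrsim |\mu_m-\mu_h|\,|S^w_{s:t}-S^w_{s:\tau}|/\sqrt{S^w_{s:e}}$ near $\tau$.
\end{itemize}

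\textbf{Step 3 (localization).} Following the NOT proof of \citet{baranowski2019narrowest}, show on the event of Step 1 that the selected narrowest interval contains exactly one change point. Then compare the empirical maximizer $b^*$ with $\tau$ using the inner-product form: the loss in signal must be offset by noise differences bounded by $\sqrt{\log(N/\delta)}$. This yields
\[
|S^w_{s:b^*}-S^w_{s:\tau}|\lesssim \frac{\log(N/\delta)}{(\mu_m-\mu_h)^2},
\]
which is exactly \eqref{eq:weighted-localisation}, since $|S^w_{s:b^*}-S^w_{s:\tau}|=\sum_{i\in(\tau,b^*]\cup(b^*,\tau]} \sigma_i^{-2}$. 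The claim $\widehat K=K$ follows by choosing $r$ between the noise level $C\sqrt{\log(N/\delta)}$ and the signal level. The recursion never re-detects a change point because after splitting at $b^*$ the residual weighted mass near $\tau$ is below threshold.

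\textbf{First part.} Here $B(s,e)=e-s$, and \eqref{eq:snr-thm-1} implies
\[
(\mu_m-\mu_h)^2\sum_{i=\tau_{j-1}+1}^{\tau_j}\sigma_i^{-2}\ge(\mu_m-\mu_h)^2\Delta_1/\sigma_{\max}^2\ge c\log(N/\delta).
\]
Also, the random intervals with $M$ as in Appendix~\ref{sec:asp} contain, with probability $1-\delta/2$, an interval around each $\tau_j$ of length of order $\Delta_1$ that avoids other change points. Its weighted mass is at least of order $\Delta_2$ times a constant fraction, so the signal exceeds $r$.

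\textbf{Second part.} This is the main obstacle. With only \eqref{eq:snr-thm-2}, intervals must be good in $w$-mass rather than in length. With $M\ge N^2\log(K/\delta)$, every one of the $O(N^2)$ intervals is drawn with high probability, so we may use a deterministic interval. For each $\tau_j$, pick $[s,e]$ with $S^w_{s:\tau_j}$ and $S^w_{(\tau_j+1):e}$ each between $\Delta_2/3$ and $\Delta_2/2$. Such integer endpoints exist because each single weight satisfies $w_i=O(\Delta_2)$; this is where the condition $\max_i\sigma_i^{-2}=O(\Delta_2)$ is needed, and the constants have to be tuned so the discretization does not overshoot. Using $B=S^w_{s:e}$ as the narrowness criterion, the selected interval has $w$-mass at most that of this canonical interval. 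That is too small to contain two change points, each of which would require $w$-mass of order $\Delta_2$ on both sides to be flagged. Making this "narrowest implies at most one change point" argument rigorous in the $w$-metric is the delicate step. After that, Step 3 applies unchanged.
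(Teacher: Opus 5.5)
Your architecture matches the paper's. You rerun the NOT argument of \citet{baranowski2019narrowest} in the $w$-metric, with index-scale intervals under \eqref{eq:snr-thm-1} and weight-scale intervals under \eqref{eq:snr-thm-2}. The gap is in Step~3, the only place the claimed rate is produced.

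Write $\kappa=|\mu_m-\mu_h|$, $d=|S^w_{s:b}-S^w_{s:\tau}|$ and $\eta=\min\{S^w_{s:\tau},S^w_{(\tau+1):e}\}$. Your Step~2 signal loss $\kappa d/\sqrt{S^w_{s:e}}$, set against noise differences bounded by a constant multiple of $\sqrt{\log(N/\delta)}$, gives only $d\lesssim\sqrt{\log(N/\delta)\,S^w_{s:e}}/\kappa$. This is $O(\log(N/\delta)/\kappa^2)$ only if $S^w_{s:e}\lesssim\log(N/\delta)/\kappa^2$. Nothing in the narrowest-interval selection guarantees that, since the selected interval can carry weight of order $\Delta_2$. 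Even the sharper loss $\kappa d/\sqrt{\eta}$, combined with an $O(r)$ noise bound, yields only $d\lesssim r\sqrt{\eta}/\kappa$. To reach $r^2/\kappa^2$ the noise contribution must shrink with $d$, and your argument has no mechanism for this.

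The paper supplies that mechanism by comparing squared CUSUMs. For $b\ge\tau$, $(W^{\bm{Y}}_{s,e}(\tau))^2-(W^{\bm{Y}}_{s,e}(b))^2$ splits into three parts:
the population difference $D=\kappa^2S^w_{(\tau+1):b}S^w_{s:\tau}/S^w_{s:b}\ge\tfrac{\kappa^2}{2}\min\{d,S^w_{s:\tau}\}$;
a pure-noise term that is at least $-r^2$ on event $A$;
and a cross term $2\langle\bm{\epsilon},\bm{\mu}_{\psi}\rangle_w$.

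The key fact is the identity \eqref{eq:identity}, $\|\bm{\mu}_{\psi}\|_w^2=D$, so event $B$ bounds the cross term by $2r\sqrt{D}$. Hence every $b$ with $D>(1+\sqrt{2})^2r^2$ loses to $\tau$. Since $S^w_{s:\tau}$ exceeds a sufficiently large multiple of $r^2/\kappa^2$ on the selected interval, this forces $d\lesssim r^2/\kappa^2$.

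Your Step~1 event already suffices for this. On $[s,e]$, $\bm{\mu}_{\psi}$ equals the weighted two-piece fit of $\bm{\mu}$ with break at $b$, minus $\bm{\mu}$, and it vanishes outside $[s,e]$. So $\bm{\mu}_{\psi}/\|\bm{\mu}_{\psi}\|_w$ is, up to sign, the CUSUM direction $\psi^{\tau}_{b+1,e}$ (if $b<\tau$) or $\psi^{\tau}_{s,b}$ (if $b>\tau$), and event $B$ is contained in event $A$. What is missing is the squared comparison itself. Alternatively, within your unsquared framework you would need both the sharper loss $\kappa d/\sqrt{\eta}$ and a noise-difference bound of order $r\sqrt{d/\eta}$, which you would have to derive from your partial-sum event.

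Two smaller points.
First, with $M\ge N^2\log(K/\delta)$ it is not true that all $O(N^2)$ intervals are drawn with high probability. You only need one good interval per $\tau_j$, though, and $\PP(C^c)\le K(1-N^{-2})^M$ handles that, as in the paper.
Second, the ``narrowest implies one change point'' step you flag as delicate is immediate. An interval containing both $\tau_j$ and $\tau_{j+1}$ contains $[\tau_j,\tau_{j+1}+1]$, whose weight exceeds $\Delta_2$. That is more than the weight of your canonical interval, which is over threshold under \eqref{eq:snr-thm-2}.
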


The error bound in \eqref{eq:weighted-localisation} measures the weighted distance between the estimated and true change points. To provide intuition, consider sentence-level detection where the variance $\sigma_i^2$ is inversely proportional to the sentence length $n_i$ (i.e., $\sigma_i^2 \propto n_i^{-1}$). In this scenario, the term $\sum \sigma_i^{-2}$ in \eqref{eq:weighted-localisation} represents the total number of tokens between $\tau_j$ and $\widehat{\tau}_j$. We also note that when $K=0$, it follows immediately from the proof (by considering the event $A$) that with an appropriately chosen threshold value $r$, the output satisfies $\hat{K} = 0$ with probability at least $1-\delta$.

Comparing \Cref{thm:cusum} with \Cref{thm:cusum-hetero}, we see that WCP outperforms VCP in two aspects. First, under the same SNR condition \eqref{eq:snr-thm-1}, WCP achieves a smaller localization error. This is because 
\[
\frac{|\tau_j-\widehat{\tau}_j|}{\sigma_{\max}^2}\leq \sum_{i\in (\tau_j,\widehat{\tau}_j]\cup (\widehat{\tau}_j,\tau_j]}\frac{1}{\sigma_i^2},
\]
and hence \eqref{eq:weighted-localisation} implies \eqref{eq:thm1-localisation}. Second, WCP obtains the stronger guarantee even under a weaker SNR condition \eqref{eq:snr-thm-2}, allowing more subtle changes in the signal to be detected. Specifically, the requirement in \eqref{eq:snr-thm-2} is less stringent than that of \eqref{eq:snr-thm-1} because 
\[
\Delta_2 = \min_{j\in[K+1]}\sum_{i = \tau_{j-1}+1}^{\tau_j}\frac{1}{\sigma_i^2} \geq \frac{\Delta_1}{\sigma_{\max}^2},
\] 
which offers a stronger signal on the left-hand side. The proof of \eqref{eq:weighted-localisation} under the weaker condition \eqref{eq:snr-thm-2} relies on refined theoretical arguments, which constitute one of our technical contributions.

\textbf{Minimax optimality}. We next show that the localization error achieved by WCP is minimax optimal. To present the minimax lower bound, we introduce the following notation. For a given $\delta\in (0,1/2)$, an estimator $\widehat{\tau}$ and a distribution $P$, let 
\[
Q(\delta,\widehat{\tau},P) := \inf\{\eta\in[0,\infty): P(|\widehat{\tau}-\tau|\leq \eta) \geq 1-\delta\},
\]
which can be interpreted as the smallest localisation error that can be achieved by $\widehat{\tau}$ with probability at least $1-\delta$.
\begin{theorem}[Minimax lower bound]\label{lemma:lowerbound}
Suppose Assumptions (i) and (ii) in Appendix~\ref{sec:asp} hold. Suppose that there is only one change point $1<\tau< N$ representing a transition from LLM-authored to human-authored text. Let $P$ denote the joint distribution of $\{\phi(X_i)\}_{i\in [N]}$. Consider the class of distributions
\begin{equation}\label{eq:one-change-class}
\mathcal{P} = \Big\{P: (\mu_m-\mu_h)^2\min\{S^w_{1:\tau},S^w_{(\tau+1):N}\}\geq C\log(1/\delta)\Big\},
\end{equation}
with $w_i = \sigma_i^{-2}$, for some absolute constant $C>0$. It holds that 
\[
\inf_{\widehat{\tau}} \sup_{P\in \mathcal{P}}Q(\delta,\widehat{\tau},P) \geq \max\{h_1,h_2\},
\]
where $h_1$ and $h_2$ satisfy 
\begin{equation}\label{eq:lowerbound-solution}
\sum_{{\tau+1}}^{\tau+h_1}\frac{1}{\sigma_i^2} =  \frac{c\log(1/\delta)}{(\mu_m-\mu_h)^2},\quad \sum_{{\tau-h_2}}^{\tau}\frac{1}{\sigma_i^2} =  \frac{c\log(1/\delta)}{(\mu_m-\mu_h)^2},
\end{equation}
for some constant $c>0$.
\end{theorem}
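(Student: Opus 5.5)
The plan is a standard two-point (Le Cam) argument, carried out with Gaussian distributions, which are sub-Gaussian with parameter $\sigma_i$ and so satisfy Assumptions (i) and (ii). Fix variances $\sigma_1^2,\dots,\sigma_N^2$ and a change point $\tau$ with $(\mu_m-\mu_h)^2\min\{S^w_{1:\tau},S^w_{(\tau+1):N}\}$ comfortably above $C\log(1/\delta)$. We treat $h_1$; the bound for $h_2$ is symmetric, moving the change point to the left. Let $P_0$ be the product law with $\phi(X_i)\sim N(\mu_m,\sigma_i^2)$ for $i\le\tau$ and $N(\mu_h,\sigma_i^2)$ for $i>\tau$. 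Let $P_1$ be the same law but with change point $\tau'=\tau+h_1$, where $h_1$ is defined by \eqref{eq:lowerbound-solution}. Then $P_0$ and $P_1$ differ only on coordinates $i\in(\tau,\tau+h_1]$, whose means are $\mu_h$ under $P_0$ and $\mu_m$ under $P_1$.

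\textbf{Step 1: both hypotheses lie in $\mathcal{P}$.} Moving the change point from $\tau$ to $\tau+h_1$ adds weight $c\log(1/\delta)/(\mu_m-\mu_h)^2$ to the left segment. It removes the same amount from the right segment. If $c$ is small relative to the slack in the SNR constraint, both configurations still satisfy $(\mu_m-\mu_h)^2\min\{\cdot\}\ge C\log(1/\delta)$. This may require the statement's constant $C$ to be chosen large enough relative to $c$, or the original $\tau$ to be chosen with extra room.

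\textbf{Step 2: the divergence is small.} For product Gaussian laws with common variances,
\[
\KL(P_0\|P_1)=\sum_{i=\tau+1}^{\tau+h_1}\frac{(\mu_m-\mu_h)^2}{2\sigma_i^2}=\frac{c}{2}\log(1/\delta).
\]
This is the step where the weighted sum $\sum\sigma_i^{-2}$ enters naturally, and it explains why the lower bound matches the WCP upper bound in Theorem~\ref{thm:cusum-hetero}.

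\textbf{Step 3: reduction to testing.} Suppose an estimator $\widehat\tau$ had $Q(\delta,\widehat\tau,P_j)<h_1/2$ for both $j=0,1$. Then the test $\psi=\mathbf{1}\{|\widehat\tau-\tau'|<|\widehat\tau-\tau|\}$ would distinguish the two laws with both error probabilities at most $\delta$. The Bretagnolle--Huber inequality gives
\[
P_0(\psi=1)+P_1(\psi=0)\ge \tfrac12\exp\{-\KL(P_0\|P_1)\}=\tfrac12\delta^{c/2}.
\]
For $c<2$ and $\delta<1/2$, the right-hand side exceeds $2\delta$ once constants are adjusted (e.g.\ $c\le 1$ gives $\tfrac12\delta^{1/2}>2\delta$ for $\delta<1/16$). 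For larger $\delta$ in $(1/16,1/2)$, $h_1$ can be shrunk by a constant, which is absorbed into $c$. This contradiction yields $\sup_{P\in\mathcal{P}}Q\ge h_1/2$.

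\textbf{Step 4: conclusion.} Repeating the argument with the change point moved to the left gives $h_2/2$. Taking the maximum over the two constructions and absorbing the factor $1/2$ into $c$, by redefining $h_1,h_2$ through a smaller constant in \eqref{eq:lowerbound-solution}, yields the claim.

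\textbf{Main obstacle.} The delicate part is bookkeeping rather than analysis. First, the constants $c$ and $C$ must be compatible, so that both hypotheses belong to $\mathcal{P}$ while the KL divergence remains at most a small multiple of $\log(1/\delta)$. Second, the discrete definition of $h_1$ requires care: the partial sums of $\sigma_i^{-2}$ jump, so we take $h_1$ to be the largest integer with partial sum at most the target. The assumption, carried over from Theorem~\ref{thm:cusum-hetero}, that each $\sigma_i^{-2}$ is small relative to the segment weights keeps the overshoot controlled.
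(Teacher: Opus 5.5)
This is essentially the paper's proof: the same two Gaussian product laws with common variances $\sigma_i^2$, with the change point moved from $\tau$ to $\tau+h_1$ (resp.\ $\tau-h_2$), and the same KL computation over the shifted block (you keep the factor $1/2$ that the paper drops, which is harmless). The only route difference is the high-probability Le Cam step, which you derive from an explicit test plus Bretagnolle--Huber where the paper cites Corollary~6 of Ma et al.; your Step~1 also makes explicit the membership in $\mathcal{P}$ that the paper only asserts.

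Two minor corrections, both of which the paper's argument also glosses over in substance:
\begin{itemize}
\item Bretagnolle--Huber gives no contradiction once $\delta\ge 1/4$, since there $\tfrac12 e^{-\KL}\le\tfrac12\le 2\delta$ however small $h_1$ is. So you need the $1-\mathrm{TV}$ form or the paper's condition $\KL\le\log\{1/(4\delta(1-\delta))\}$. Either budget vanishes as $\delta\to 1/2$, so the $c\log(1/\delta)$ form only holds for $\delta$ bounded away from $1/2$.
\item The factor $1/2$ in your bound $h_1/2$ cannot in general be absorbed into $c$ when the $\sigma_i^{-2}$ are heterogeneous, because index counts and weighted sums do not scale together. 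The conclusion is really $\gtrsim\max\{h_1,h_2\}$, which is also all the paper's proof delivers.
\end{itemize}
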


To see that the guarantee in \Cref{thm:cusum-hetero} is minimax optimal, we set the failure probability $\delta = N^{-1}$, which is a natural choice to ensure that \eqref{eq:weighted-localisation} holds with probability approaching $1$ as $N \rightarrow \infty$. Without loss of generality, we consider the upper bound in the case of a single change point. When $\widehat{\tau} > \tau$, \eqref{eq:weighted-localisation} becomes
\[
\sum_{i=\tau+1}^{\widehat{\tau}}\frac{1}{\sigma_i^2}=O\Big( \frac{\log(N)}{(\mu_m-\mu_h)^2}\Big),
\]
which together with the definition of $h_1$ in \eqref{eq:lowerbound-solution}, immediately implies $\widehat{\tau}-\tau =O(h_1)$. Similarly, we have $\tau-\widehat{\tau} =O( h_2)$ when $\tau > \widehat{\tau}$. Combining these two cases, we conclude that WCP's estimated change point satisfies
\[
|\widehat{\tau} - \tau| =O( \max\{h_1,h_2\}),
\]
and therefore matches the lower bound up to constants in \Cref{lemma:lowerbound}. We present here the lower bound concerning localizing a single change for notation simplicity, as it is commonly done in the literature \cite[e.g.][]{wang2020univariate,wang2021optimal,li2022network}. Additionally, with minor changes in notation, this lower bound also immediately applies to any larger model class $\mathcal{P}^*$, including the one that contains at most $K$ change points, as long as the model class in \eqref{eq:one-change-class} satisfies $\mathcal{P}\subseteq\mathcal{P}^*$. Also, we clarify that our lower bound should be interpreted conditional on a fixed detector $\phi$: after each text segment $X_i$ is transformed into the statistic $\phi(X_i)$, no estimator can achieve a worst-case localization error smaller than the stated lower bound.

\textbf{Extension to dependent scores and unknown score variances.} The results so far require the scores $\phi(X_i)$ to be independent and $\sigma_i$ to be known. We now extend our theory to relax these requirements. We consider the following two types of conditions: (i) Suppose that $\sigma_i^2$ are unknown but we have access to estimators $\hat{\sigma}_i^2$ such that with probability at least $1-\delta$,  for some $R \in (0,1)$
\begin{equation*}
\max_{i = 1,\dotsc,N}|\hat{\sigma}^2_i/\sigma_i^2 - 1|\leq R.
\end{equation*}
We note that the above condition implies $(1+R)^{-1}\sigma_i^{-2} \leq \hat{\sigma}_i^{-2}\leq (1-R)^{-1}\sigma_i^{-2}$. Our results allow $R$ to be an absolute constant, which essentially only requires estimating the variances up to some constant factor and is much weaker than requiring consistent estimation of the unknown $\sigma_i^{-2}$. (ii) We can also allow temporal dependence measured by the total-variation dependence measure considered in \cite[e.g.][]{samson2000concentration,kontorovich2008concentration}. Specifically, let $Z_i = (\phi(X_i) - \mathbb{E}(\phi(X_i))/\sigma_i$ and $\mathcal{L}(Z\mid Y=y)$ denote the conditional distribution of $Z$ given $Y=y$, and for $1\leq i < j \leq N$, let 
\[
{\eta}_{ij}
=
\sup_{z_{1:i-1},\,z,z'}
\left\|
\mathcal{L}\!\left(
Z_{j:N}\mid Z_{1:i-1}=z_{1:i-1},\, Z_i=z
\right)
-
\mathcal{L}\!\left(
Z_{j:N}\mid Z_{1:i-1}=z_{1:i-1},\, Z_i=z'
\right)
\right\|_{\mathrm{TV}}.
\]
Assume that ${\eta}_{ij} \leq \rho(j-i)$ for some $\rho$ and $\Lambda_{\rho}
=
1+\sum_{k=1}^{\infty}\sqrt{\rho(k)}
<\infty.$ We choose this type of dependency measure due to the availability of sharp concentration inequality \cite{samson2000concentration} that is compatible with our heterogeneous noise setup. Specific processes that satisfy this type of condition can be found in \cite[e.g.][]{samson2000concentration,kontorovich2008concentration}. The detailed description of these two conditions is collected in \Cref{asp:extension} in \Cref{sec:extension}, under which we have the following guarantee.

\begin{theorem}\label{thm:extension}
Suppose \Cref{asp:extension} holds.
let $\Delta_2 = \min_{j\in[K+1]}\sum_{i = \tau_{j-1}+1}^{\tau_j}\sigma_i^{-2}$, and $\Gamma_R = (1+R)/(1-R)$. If  $\max_{i\in[N]} \sigma_i^{-2}=O(\Gamma_R^{-1}\Delta_2)$, $M\ge N^2\log(K/\delta)$, and
\begin{equation}\label{eq:snr-2}
(\mu_m-\mu_h)^2 \Delta_2 \ge c\Gamma_R\Lambda_\rho^2\log(N/\delta),
\end{equation}
for some absolute constant $c>0$,
then 
if we choose $w_i = \hat{\sigma}_i^{-2}$, $B(s,e) = S^w_{s:e}$, and $r = O(\Lambda_\rho\sqrt{\frac{\log(N/\delta)}{1-R}})$ in the WCP algorithm, its outputs, $\{\widehat{\tau}_j\}_{j\in [\widehat{K}]}$ and $\widehat{K}$ satisfy that, with probability at least $1-\delta$,
\begin{equation}
\widehat{K} = K\quad  \text{and } \,
\sum_{i\in (\tau_j,\widehat{\tau}_j]\cup (\widehat{\tau}_j,\tau_j]}\frac{1}{\sigma_i^2}=O\Big( \Gamma_R\Lambda_\rho^2\frac{\log(N/\delta)}{(\mu_m-\mu_h)^2}\Big), \; \forall j.
\end{equation}
\end{theorem}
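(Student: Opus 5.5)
The plan is to reduce \Cref{thm:extension} to the deterministic argument behind \Cref{thm:cusum-hetero}. That proof works on a high-probability event on which the weighted CUSUM noise terms are uniformly controlled; outside this probabilistic step it is purely deterministic, relying on the weighted-CUSUM signal analysis, the narrowest-interval selection with $B(s,e)=S^w_{s:e}$, and the random-interval coverage ensured by $M\ge N^2\log(K/\delta)$. Two things must therefore be redone. First, the deterministic signal and localization bounds must be re-verified with the estimated weights $w_i=\hat\sigma_i^{-2}$ in place of the oracle $\sigma_i^{-2}$. Second, the concentration events must be re-established under the total-variation dependence condition. Throughout, I work on the intersection of the variance-estimation event $\max_i|\hat\sigma_i^2/\sigma_i^2-1|\le R$ and the noise event. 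Since the weights are data-dependent, I will either condition on $\hat\sigma$, assuming (as I expect \Cref{asp:extension} provides) independence of $\hat\sigma$ from the scores or sample splitting, or else bound the noise uniformly over weight vectors in the band $[(1+R)^{-1}\sigma_i^{-2},(1-R)^{-1}\sigma_i^{-2}]$.

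\textbf{Step 1 (weights comparability).} On the variance-estimation event, every weighted sum satisfies $S^{\hat w}_{s:e}\in[(1+R)^{-1}S^{\sigma^{-2}}_{s:e},(1-R)^{-1}S^{\sigma^{-2}}_{s:e}]$. Consequently, all quantities in the proof of \Cref{thm:cusum-hetero} change by factors of at most $\Gamma_R$. These include the population weighted CUSUM at a true change point, of order $|\mu_m-\mu_h|\sqrt{\min(S^w_{\text{left}},S^w_{\text{right}})}$; the weighted spacing $\Delta_2$; and the condition $\max_i\hat\sigma_i^{-2}=O(\Delta_2)$, which becomes the stated $\max_i\sigma_i^{-2}=O(\Gamma_R^{-1}\Delta_2)$. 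The signal lower bounds lose a factor $(1+R)^{-1/2}$, which accounts for the $\Gamma_R$ in \eqref{eq:snr-2}.

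\textbf{Step 2 (noise under dependence).} For fixed $(s,t,e)$, the noise part of the weighted CUSUM is a linear form $\sum_i a_i Z_i$ with $a_i$ proportional to $w_i\sigma_i$ and normalized so that $\sum_i a_i^2\sigma_i^{-2}w_i^{-1}$-type quantities are $O(1/(1-R))$. More precisely, $\sum_i a_i^2\le (1-R)^{-1}$ after substituting $w_i\sigma_i^2\le(1-R)^{-1}$. Samson's concentration inequality for functions of dependent variables with $\eta_{ij}\le\rho(j-i)$ gives a Lipschitz function of $(Z_1,\dots,Z_N)$ Gaussian concentration with variance proxy $\Lambda_\rho^2\|a\|_2^2$. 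Applying it and taking a union bound over $O(N^3)$ triples yields, with probability at least $1-\delta$,
\[
\max_{s\le t<e}\big|W^{\bm Y}_{s,e}(t)-W^{\bmu}_{s,e}(t)\big|\lesssim \Lambda_\rho\sqrt{\log(N/\delta)/(1-R)}.
\]
The same inequality controls the auxiliary inner-product terms used in the localization refinement. This justifies the threshold choice $r$. I expect this step to be the main obstacle. Samson's result is naturally stated for convex Lipschitz functions or linear forms of bounded or sub-Gaussian-after-coupling variables, so one must check that the standardized sub-Gaussian $Z_i$ fit, possibly via truncation or the Marton coupling form. One must also ensure that the localization argument's refined terms, which are not simple CUSUMs, are still linear forms with controlled $\ell_2$ norm.

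\textbf{Step 3 (assemble).} On the resulting event, the deterministic argument of \Cref{thm:cusum-hetero} runs verbatim, with noise level $\lambda=\Lambda_\rho\sqrt{\log(N/\delta)/(1-R)}$ and signal deflated by $\Gamma_R$-factors. Under \eqref{eq:snr-2}, $r$ lies between $\lambda$ and the signal level, so no false detections occur and every true change point is found in some narrowest interval. The localization step compares the CUSUM drop $|\mu_m-\mu_h|\cdot S^{\hat w}_{(\tau_j,\hat\tau_j]}/\sqrt{\cdot}$ against $\lambda$. This gives $S^{\hat w}$ over the discrepancy set of order $\Gamma_R\Lambda_\rho^2\log(N/\delta)/(\mu_m-\mu_h)^2$, up to $(1-R)$ factors. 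Converting back to $\sigma_i^{-2}$ costs one more factor of $(1+R)$, which is absorbed into $\Gamma_R$. A final union bound over the two events completes the proof.
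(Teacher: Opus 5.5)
Your proposal is correct and follows essentially the same route as the paper: work on the variance-estimation event, re-establish events $A$ and $B$ via Samson's inequality applied to linear forms $\sum_i V_i b_i$ with $\sum_i b_i^2\lesssim (1-R)^{-1}$ (using $w_i\sigma_i^2\le (1-R)^{-1}$ and $\|\psi^b_{s,e}\|_w=1$), then rerun the deterministic argument of \Cref{thm:cusum-hetero} with $S^w$ sandwiched between $(1+R)^{-1}S^{\sigma}$ and $(1-R)^{-1}S^{\sigma}$. The two concerns you flag are settled by \Cref{asp:extension} itself. It takes the $\hat\sigma_i^2$ to be independent of $\bm X$, so you can condition on the weights, and it imposes $|Z_i|\le B$, so Samson's inequality applies directly without truncation or coupling arguments.
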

We note that when $R\asymp \Lambda_\rho\asymp 1$, then the results in \Cref{thm:extension} agree with that of \Cref{thm:cusum-hetero}, which implies the WCP algorithm maintains optimality even in situations where there is temporal dependence and the oracle weights are unknown.

\section{Experiments}\label{sec:exp}
\vspace{-7pt}
We conduct extensive experiments to evaluate the empirical performance of our approach. In this section, we focus on sentence-level detection. We first present a semi-synthetic analysis that constructs synthetic data from real-world human-written benchmark datasets and consider settings with a single change point. We then conduct a real data analysis to illustrate the practical utility of our approach. 
In Appendix~\ref{sec:simu-multi-cp}, we extend our evaluation to settings with multiple change points. Appendix~\ref{sec:simu-sensitive} further conducts a sensitivity analysis under more complex settings to demonstrate the robustness of our method. Appendix~\ref{sec:token-experiments} studies token-level detection. 

\textbf{Semi-synthetic analyses}. We first describe our procedure for generating synthetic human--LLM co-authored text. We randomly sample 100 human-written documents from each of three publicly available benchmark datasets for LLM-generated text detection: \textit{WikiQA} \citep{rajpurkar2016squad}, \textit{News} \citep{Narayan2018DontGM}, and \textit{Story} \citep{fan2018hierarchical}. For each document, we split the text into multiple segments. We then prompt LLMs to regenerate some segments while keeping the remaining segments in their original human-authored form. To ensure diversity, several models are employed, including GPT-5-mini \citep{singh2025openai} and Claude 4.5 \citep{anthropic2025claude4.5}.  Specific prompts used for LLM rewriting are detailed in Appendix~\ref{sec:exp-details}. 

We compare VCP, WCP, a sentence-level prediction baseline \citep[denoted by \underline{SenPred},][]{kushnareva2024boundary}, a majority-voting algorithm \citep[denoted by \underline{Voting},][]{zhang2024machine}, a prompting method that directly prompts the LLM that rewrites the paragraph to segment the human--LLM co-authored text (denoted by \underline{LLMPred}), \underline{TextTiling} \citep{hearst1997text}, and an existing LLM detector \citep[\underline{PaLD},][]{lei2025pald}. To ensure fairness among score-based methods, we use the same detection score, $\phi$, for segmentation. Our main text presents results obtained using AdaDetectGPT \citep{zhou2025adadetectgpt}. We also consider several choices of $\phi$ in our sensitivity analysis (Appendix~\ref{sec:simu-sensitive}). 

For each method, we use the WindowDiff (WD) metric~\citep{pevzner2002critique} to evaluate the accuracy of its estimated change points. We also report count error (CE), which denotes the difference between the true number of change points and the estimated number. A negative CE indicates overestimation of the number of change points. We report the average WD and CE across 100 documents for every combination of LLM generator and dataset. We first evaluate the performance of our methods under a single change point setting. When generating the co-authored text, the change-point location is sampled at random for each document. This setting better reflects practical cases, where the LLM-written segment may appear at an arbitrary position.

\begin{wraptable}{r}{0.70\linewidth}
\centering
\caption{Results on single change-point detection. The best results are presented in \textbf{bold}. PaLD has a high computational cost; thus, we only consider its results on the WikiQA datasets. }\label{tab:single-cp}
\vspace*{-6pt}
\setlength{\tabcolsep}{1.8pt}
\small
\begin{tabular}{@{}lcccccc@{}}
\toprule
\multirow{2}{*}{Method} & \multicolumn{2}{c}{WikiQA} & \multicolumn{2}{c}{News} & \multicolumn{2}{c}{Story} \\
\cmidrule(lr){2-3}\cmidrule(lr){4-5}\cmidrule(lr){6-7}
& WD & CE & WD & CE & WD & CE \\
\midrule
LLMPred & 0.49{\scriptsize$\pm$0.02} & -0.53{\scriptsize$\pm$0.18} & 0.60{\scriptsize$\pm$0.02} & -6.26{\scriptsize$\pm$0.72} & 0.59{\scriptsize$\pm$0.03} & -6.87{\scriptsize$\pm$0.82} \\
TextTiling & 0.29{\scriptsize$\pm$0.01} & 1.00{\scriptsize$\pm$0.00} & 0.31{\scriptsize$\pm$0.00} & 1.00{\scriptsize$\pm$0.00} & 0.31{\scriptsize$\pm$0.00} & \textbf{1.00{\scriptsize$\pm$0.00}} \\
SenPred & 0.24{\scriptsize$\pm$0.02} & -0.91{\scriptsize$\pm$0.11} & 0.70{\scriptsize$\pm$0.02} & -6.84{\scriptsize$\pm$0.31} & 0.91{\scriptsize$\pm$0.01} & -12.49{\scriptsize$\pm$0.44} \\
PaLD & 0.49{\scriptsize$\pm$0.04} & -1.53{\scriptsize$\pm$0.27} & -- & -- & -- & -- \\
Voting & 0.29{\scriptsize$\pm$0.03} & \textbf{0.00{\scriptsize$\pm$0.04}} & \textbf{0.10{\scriptsize$\pm$0.01}} & \textbf{-0.34{\scriptsize$\pm$0.09}} & 0.42{\scriptsize$\pm$0.03} & -2.41{\scriptsize$\pm$0.22} \\
VCP & 0.20{\scriptsize$\pm$0.02} & 0.22{\scriptsize$\pm$0.07} & \textbf{0.10{\scriptsize$\pm$0.02}} & -1.09{\scriptsize$\pm$0.16} & \textbf{0.28{\scriptsize$\pm$0.02}} & -1.02{\scriptsize$\pm$0.16} \\
WCP & \textbf{0.19{\scriptsize$\pm$0.02}} & 0.01{\scriptsize$\pm$0.09} & \textbf{0.10{\scriptsize$\pm$0.02}} & -1.69{\scriptsize$\pm$0.19} & \textbf{0.28{\scriptsize$\pm$0.02}} & -2.60{\scriptsize$\pm$0.22} \\

\bottomrule
\end{tabular}
\end{wraptable}
Table~\ref{tab:single-cp} reports the results under the single random change-point setting. Since PaLD is computationally demanding, we implement it only on the WikiQA dataset. 
Table~\ref{tab:single-cp} leads to the following observations: (i) WCP and VCP attain the smallest WD across all three datasets. The improvement over the baselines is particularly evident on the WikiQA and Story datasets, where sentence-wise prediction and direct LLM prompting produce substantially larger segmentation errors. This demonstrates the effectiveness of using change point detection to segment human–LLM co-authored text. (ii) The baseline methods, SenPred, LLMPred, and PaLD, often produce large negative CEs, indicating a tendency to overestimate the number of change points. This overestimation, in turn, leads to larger WDs. (iii) The TextTiling method uses semantic similarity to segment text, but its segmentation may not align with the boundaries created by humans and LLMs.

\begin{wraptable}{r}{0.57\linewidth}
\centering
\vspace{-12pt}
\caption{WD results under paragraph-length settings.}\label{tab:vcp-wcp}
\vspace{-7pt}
\setlength{\tabcolsep}{2pt}
\small
\begin{tabular}{l|ccccc}
\toprule
$k$ & 2 & 3 & 4 & 5 & 6 \\
\midrule
Voting & 0.49{\scriptsize$\pm$0.07} & 0.51{\scriptsize$\pm$0.08} & 0.49{\scriptsize$\pm$0.09} & 0.50{\scriptsize$\pm$0.10} & 0.61{\scriptsize$\pm$0.05} \\
SenPred & 0.74{\scriptsize$\pm$0.06} & 0.75{\scriptsize$\pm$0.07} & 0.78{\scriptsize$\pm$0.06} & 0.72{\scriptsize$\pm$0.08} & 0.71{\scriptsize$\pm$0.06} \\
TextTiling & 0.41{\scriptsize$\pm$0.07} & 0.52{\scriptsize$\pm$0.09} & 0.48{\scriptsize$\pm$0.05} & 0.55{\scriptsize$\pm$0.08} & 0.76{\scriptsize$\pm$0.05} \\
VCP & 0.43{\scriptsize$\pm$0.07} & 0.41{\scriptsize$\pm$0.07} & 0.47{\scriptsize$\pm$0.05} & 0.47{\scriptsize$\pm$0.04} & 0.51{\scriptsize$\pm$0.04} \\
WCP & \textbf{0.40{\scriptsize$\pm$0.08}} & \textbf{0.37{\scriptsize$\pm$0.08}} & \textbf{0.32{\scriptsize$\pm$0.08}} & \textbf{0.29{\scriptsize$\pm$0.09}} & \textbf{0.28{\scriptsize$\pm$0.08}} \\
\bottomrule
\end{tabular}
\end{wraptable}

\textbf{Paragraph-level detection}. While the theoretical advantage of WCP over VCP is clear from our results in \Cref{sec:theory}, in practice, WCP is expected to outperform VCP when segment lengths vary substantially. We illustrate this using a synthetic paragraph-level change-point task with a single change point. We consider settings where each paragraph contains $j$ sentences, with $j \in \{\textup{FN}_1, \ldots, \textup{FN}_k\}$, where $\textup{FN}_k$ denotes the $k$-th Fibonacci number. For example, when $k=2$, we have $\{\textup{FN}_1, \ldots, \textup{FN}_k\} = \{1, 1\}$, so all paragraphs contain only one sentence. In contrast, when $k=6$, we have $\{\textup{FN}_1, \ldots, \textup{FN}_k\} = \{1, 1, 2, 3, 5, 8\}$, leading to substantial variation in the number of sentences across paragraphs. The empirical results for $k$ ranging from 2 to 6 are summarized in Table~\ref{tab:vcp-wcp}. We observe that WCP achieves better performance than VCP, and that this advantage becomes more pronounced as $k$ increases.

\textbf{Real data analyses}. We next evaluate our method on a real-world human--AI co-authored text dataset provided by \citet{lee2022coauthor}, referred to as the CoAuthor dataset. Following the setup in \citet{zeng2024towards}, each document is annotated at the sentence level with three labels: human-written, collaboratively written, and fully LLM-generated sentences. This setting turns localization into a three-class segmentation problem, which better reflects practical human--AI co-authoring scenarios. In addition to the baselines in the semi-synthetic data analyses, we also compared a supervised learning-based algorithm considered in \citet{zeng2024towards}, SegFormer \citep{bai2023segformer}. Results are summarized in Table~\ref{tab:real-data-coauthor}. VCP and WCP achieve the smallest WD, indicating better change localization on this real-world co-authoring dataset. SegFormer obtains the smallest absolute CE, which is expected for a supervised model trained on the dataset. Nonetheless, its WD remains larger than that of the proposed change-point methods. Unsupervised baselines, such as Voting and TextTiling, have substantially larger WD, suggesting that they remain less effective at handling segmentation tasks in this three-class setting.

\section{Conclusion}
This paper proposes a change point detection approach to localize human- and LLM-authored text in hybrid human--LLM documents. We develop  algorithms to address this largely unexplored problem, establish their finite-sample error guarantees and minimax optimality, and demonstrate their superior empirical performance. By bridging change point detection with modern LLMs, our work paves the way for leveraging classical time-series tools to solve the evolving challenges posed by LLM-generated content.

\begin{ack}
The authors thank the three anonymous referees and the area chair for their insightful and constructive comments.

Mengchu Li, Jin Zhu, and JingLai Li were supported by the AIRR Gateway project 2026, ``Efficient Approaches on Detecting Generative AI''. Shi's work was supported by the AIRR Gateway Project 2026, ``Reasoning-Enhanced Multimodal AI-Generated Content Detection''.
\end{ack}

\bibliographystyle{plainnat}
\bibliography{reference}

\appendix
\section{Additional literature review}\label{sec:app-literature}

\textbf{LLM-generated text detection}. These works primarily study the binary classification problem of distinguishing between human- and LLM-authored text. Existing approaches can be broadly categorized into machine learning (ML)-based and zero-shot methods. Specifically, ML-based methods collect datasets of human-written text, prompt LLMs to produce corresponding rewrites, and then utilize both sources of text to train a classifier to perform this binary classification \citep{solaiman2019release, ippolito2020bert, guo2023close, hu2023radar, mao2024raidar, guo2024biscope, tian2024multiscale, yu2024dpic, chen2025imitate, zhou2025adadetectgpt, zhou2026learntodistance, su2026reader}. In contrast, zero-shot methods are training-free. They construct statistical measures derived directly from the text, such as its intrinsic dimensionality, or based on the next-token probability distributions of a target LLM \citep{gehrmann2019gltr, mitchell2023detectgpt, su2023detectllm, tulchinskii2023intrinsic, bao2024fastdetectgpt, hans2024spotting, yang2024dnagpt, song2025deep}. These measures take on different values for human and LLM-authored content, which serves as the basis for classification. 

While our proposal employs similar techniques for distinguishing between human and LLM-generated text, we study the more complex problem of text localization rather than providing a binary label for the entire passage, as discussed below.

\textbf{Change point detection}. Change point detection aims to identify and localize structural changes within a sequence of observations -- for instance, a sudden shift in their expected value. It has been a popular research topic in statistics and machine learning \citep[e.g.,][]{baranowski2019narrowest,verzelen2023optimal,bhattacharyya2025theoretical,wang2020univariate}. 
The literature on change point detection covers different types of data and various categories of structural changes. Methodologically, the CUSUM statistic, introduced in \Cref{sec:preliminary}, plays a central role in numerous algorithms \citep[e.g.][]{fryzlewicz2014wild,baranowski2019narrowest,wang2020univariate,cho2022two,wang2021optimal}. Theoretically, minimax rates of convergence have been established to understand the fundamental difficulty of different problems and to provide a rigorous benchmark for evaluating different algorithms \citep[e.g.][]{yu2020review}.

\section{Proofs}

\subsection{Assumptions}\label{sec:asp}
We list the assumptions required for \Cref{thm:cusum} and \Cref{thm:cusum-hetero} to hold here. The $\psi_2$-norm for sub-Gaussian random variables is defined in \eqref{eq:subG}. 
\begin{enumerate}
\item[(i)] $\phi(X_1),\phi(X_2),\cdots,\phi(X_N)$ are independent;\vspace{-0.5em}
\item[(ii)] $\|\phi(X_i) - \mathbb{E}(\phi(X_i))\|_{\psi_2} \leq \sigma_i$ for all $i\in [N]$;\vspace{-0.5em}
\item[(iii)] the input threshold $r$ in Algorithm \ref{alg:not_algorithm} is set to be proportional to $\sqrt{\log(N/\delta)}$;\vspace{-0.5em}
\item[(iv)] there exists some sufficiently large constant $C>0$ such that the number of random intervals $M \ge C N^2\Delta_1^{-2}\log(N(\delta\Delta_1)^{-1})$.\vspace{-1em}
\end{enumerate}
\vspace{1em}

We note that the independence and sub-Gaussianity assumptions, (i) and (ii), are primarily imposed to simplify the theoretical analysis and are standard in the change point detection literature \citep[e.g.,][]{wang2020univariate, baranowski2019narrowest, verzelen2023optimal}. We extend the results to accommodate temporal dependence among the scores in \Cref{thm:extension} with details in \Cref{sec:extension}. It is also possible to relax the sub-Gaussianity to heavy-tailed distributions \citep[e.g.,][]{li2021adversarially,cho2022two,li2025robust}. Assumptions (iii) and (iv) are mild as both $r$ and $M$ are user-specified. 

\subsection{Proofs for Theorems~\ref{thm:cusum}--~\ref{lemma:lowerbound}}

For a real-valued random variable $X$, we define its Orlicz-$\psi_2$ norm as 
\begin{equation}\label{eq:subG}
\|X\|_{\psi_2} = \inf \{t > 0: \mathbb{E}[\exp(\{|X|/t\}^2)]\leq 2\}.
\end{equation} A variable with a finite Orlicz-$\psi_2$ norm is sub-Gaussian, meaning its distribution has a tail that is at least as light as that of a Gaussian variable.

In this section, we collect the proofs of results in the main sections. To simplify notation, we shall use $\bm{Y} = (Y_1,\dotsc,Y_N)$ to denote $\phi(\bm{X}) = (\phi(X_1),\dotsc,\phi(X_N))$, and use $\mu = (\mu_1,\dotsc,\mu_N)$ to denote the mean vector of $\bm{Y}$. We can further write $Y_i = \mu_i+\epsilon_i$ where each $\epsilon_i$ satisfies $\|\epsilon_i\|_{\psi_2} \leq \sigma_i.$
We also introduce the following general notation. Recall that for a general weight vector ${w} = (w_1,\dotsc,w_N)$, $S^w_{u:v} = \sum_{i=u}^vw_i,\; 1\leq u\leq v\leq N$. For vectors ${v}, w \in \R^N$, and $b \in [N]$, with $1\leq s\leq b\leq e-1 < N$, we write 
\begin{equation}\label{eq:vec-cusum}
W_{s,e}^{\bm{v}}(b) = \sqrt{\frac{S^w_{s:b} \,S^w_{(b+1):e}}{S^w_{s:e}}}\Bigg|\sum_{i=s}^b {\frac{w_i}{S^w_{s:b}}}v_i
- \sum_{i=b+1}^{e}{\frac{w_i}{S^w_{(b+1):e}}}v_i\Bigg| = |\langle v,\psi_{s,e}^{b}\rangle_{w}|,
\end{equation}
where 
\[
\psi_{s,e}^b(i) = \begin{cases}
\sqrt{\frac{S^w_{(b+1):e}}{S^w_{s:b}\, S^w_{s:e}}},  &s\leq i \leq b,\\
-\sqrt{\frac{S^w_{s:b} }{S^w_{s:e}\,S^w_{(b+1):e}}}, &b< i\leq e,\\
0,  &\text{otherwise},
\end{cases}
\]
and $\langle x,y\rangle_{w} = \sum_{i=1}^N x_iy_iw_i$ for two vectors $x,y \in \R^N$. We also denote $\|x\|_{w}^2 = \langle x,x \rangle_w = x^\top Wx$, with $W = \text{diag}(w).$ 

The following lemma is important in the proof of \Cref{thm:cusum-hetero}.

\begin{lemma}\label{lemma:cusum-onechange}
Let $\bmu = (\mu_1,\dotsc,\mu_N)$ denote the mean vector of $\bm{Y} = (Y_1,\dotsc,Y_N)$ and let $\tau_1,\dotsc,\tau_K$ be the change points. Suppose $1\leq s < e\leq N$, such that $\tau_{j-1} < s \leq \tau_j < e \leq \tau_{j+1}$ for some $j = 1,\dotsc,K$. Let $\eta = \min\{S^w_{s:\tau_j},S^w_{(\tau_j+1):e}\}$ and $\kappa = |\mu_h-\mu_m|$. Then 
\[
W_{s,e}^{\bmu}(\tau_j) = \max_{s\leq b<e}W_{s,e}^{\bmu}(b),
\]
and $\sqrt{\eta}\kappa/\sqrt{2}\leq W_{s,e}^{\bmu}(\tau_j)\leq \sqrt{\eta}\kappa$. Moreover, it holds that 
\begin{equation}\label{eq:identity}
\|\psi_{s,e}^b\langle \psi_{s,e}^b,\bmu \rangle_w - \psi_{s,e}^{\tau_j}\langle \psi_{s,e}^{\tau_j},\bmu \rangle_w\|_w^2 = (W_{s,e}^{\bm{\mu}}(\tau_j))^2- (W_{s,e}^{\bm{\mu}}(b))^2,
\end{equation}
and 
\begin{equation}\label{eq:cusum-squarediff}
(W_{s,e}^{\bm{\mu}}(\tau_j))^2- (W_{s,e}^{\bm{\mu}}(b))^2 =
\begin{cases}
\frac{S^w_{(b+1):\tau_j}S^w_{(\tau_j+1):e}}{S^w_{(b+1):e}}\kappa^2 & \text{if}\;  s\leq b<\tau_j, \\
\frac{S^w_{(\tau_j+1):b}S^w_{s:\tau_j}}{S^w_{s:b}}\kappa^2 & \text{if}\;  \tau_j\leq b < e.
\end{cases}
\end{equation}
\end{lemma}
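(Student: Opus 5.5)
The plan is to compute everything explicitly, since on $[s,e]$ the mean vector $\bmu$ is piecewise constant with a single jump at $\tau_j$. Write $\bmu_i = a$ for $s\le i\le\tau_j$ and $\bmu_i = a'$ for $\tau_j<i\le e$, with $|a-a'|=\kappa$. Also write $L=S^w_{s:\tau_j}$, $R=S^w_{(\tau_j+1):e}$ and $T=L+R=S^w_{s:e}$.

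\textbf{Value at $\tau_j$.} At $b=\tau_j$ the two weighted averages are exactly $a$ and $a'$, so
\[
W^{\bmu}_{s,e}(\tau_j)=\kappa\sqrt{LR/T}.
\]
Since $LR/T=\eta\cdot\max\{L,R\}/T$ and $\max\{L,R\}/T\in[1/2,1]$, the bounds $\sqrt{\eta}\kappa/\sqrt2\le W^{\bmu}_{s,e}(\tau_j)\le\sqrt{\eta}\kappa$ follow at once.

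\textbf{Value at $b<\tau_j$.} Here the left average is $a$. The right average is $(S^w_{(b+1):\tau_j}a+Ra')/S^w_{(b+1):e}$, so the difference of the averages equals $\kappa R/S^w_{(b+1):e}$. Hence
\[
(W^{\bmu}_{s,e}(b))^2=\kappa^2\,\frac{S^w_{s:b}\,R^2}{T\,S^w_{(b+1):e}}.
\]
Subtracting this from $\kappa^2 LR/T$ and using $L=S^w_{s:b}+S^w_{(b+1):\tau_j}$ together with $S^w_{(b+1):e}=S^w_{(b+1):\tau_j}+R$ gives
\[
\frac{\kappa^2R}{T\,S^w_{(b+1):e}}\bigl[L\,S^w_{(b+1):e}-S^w_{s:b}R\bigr].
\]
The bracket expands to $S^w_{(b+1):\tau_j}(L+R)=S^w_{(b+1):\tau_j}T$. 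This yields the first case of \eqref{eq:cusum-squarediff}. The case $b\ge\tau_j$ follows by the symmetric computation, or by reversing the index order. Because every term in \eqref{eq:cusum-squarediff} is nonnegative, $\tau_j$ maximizes $W^{\bmu}_{s,e}(b)$, which proves the first claim.

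\textbf{Identity \eqref{eq:identity}.} First I check that $\|\psi^b_{s,e}\|_w=1$. This holds because
\[
\|\psi^b_{s,e}\|_w^2=S^w_{s:b}\frac{S^w_{(b+1):e}}{S^w_{s:b}S^w_{s:e}}+S^w_{(b+1):e}\frac{S^w_{s:b}}{S^w_{s:e}S^w_{(b+1):e}}=1.
\]
Expanding the square in \eqref{eq:identity} then gives
\[
\langle\psi^b,\bmu\rangle_w^2+\langle\psi^{\tau_j},\bmu\rangle_w^2-2\langle\psi^b,\bmu\rangle_w\langle\psi^{\tau_j},\bmu\rangle_w\langle\psi^b,\psi^{\tau_j}\rangle_w.
\]
The key observation is that, restricted to $[s,e]$, $\bmu$ lies in the span of $\mathbf 1_{[s,e]}$ and $\psi^{\tau_j}$. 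Each $\psi^b$ is $w$-orthogonal to $\mathbf 1_{[s,e]}$, since its weighted sum is $\sqrt{\cdot}-\sqrt{\cdot}=0$ by the same calculation. It follows that $\langle\psi^b,\bmu\rangle_w=\langle\psi^b,\psi^{\tau_j}\rangle_w\langle\psi^{\tau_j},\bmu\rangle_w$. Substituting this into the cross term turns it into $-2\langle\psi^b,\bmu\rangle_w^2$, so the whole expression equals $\langle\psi^{\tau_j},\bmu\rangle_w^2-\langle\psi^b,\bmu\rangle_w^2$, which is \eqref{eq:identity}.

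The only delicate step is this projection argument. It requires verifying the orthogonality to $\mathbf 1_{[s,e]}$ and that $\bmu|_{[s,e]}=\bar c\,\mathbf 1+\langle\psi^{\tau_j},\bmu\rangle_w\psi^{\tau_j}$. Everything else is routine algebra.
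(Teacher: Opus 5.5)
Your proof is correct. Your value at $\tau_j$, your two-sided bound via $\max\{L,R\}/T\in[1/2,1]$, and your algebraic derivation of \eqref{eq:cusum-squarediff} are essentially what the paper does, since it also writes $W^{\bmu}_{s,e}(b)$ in closed form on each side of $\tau_j$.

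Two steps differ. The first is maximality. The paper reads off from the closed form that $b\mapsto W^{\bmu}_{s,e}(b)$ increases up to $\tau_j$ and decreases afterwards. You instead deduce maximality from the nonnegativity of \eqref{eq:cusum-squarediff}. Both are immediate.

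The more substantive difference is \eqref{eq:identity}. The paper works coordinate by coordinate for $b<\tau_j$. It writes out the entries $k_i$ of $\psi_{s,e}^b\langle\psi_{s,e}^b,\bmu\rangle_w-\psi_{s,e}^{\tau_j}\langle\psi_{s,e}^{\tau_j},\bmu\rangle_w$, which are zero on $[s,b]$ and constant on $(b,\tau_j]$ and on $(\tau_j,e]$. It then sums $w_ik_i^2$, checks that the result equals the first case of \eqref{eq:cusum-squarediff}, and treats $b\ge\tau_j$ by repeating the computation.

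You argue geometrically instead. You use that $\psi^b_{s,e}$ is a $w$-unit vector that is $w$-orthogonal to $\mathbf 1_{[s,e]}$, and that $\bmu$ restricted to $[s,e]$ lies in $\mathrm{span}\{\mathbf 1_{[s,e]},\psi^{\tau_j}_{s,e}\}$. Then \eqref{eq:identity} is just the Pythagorean identity for the $w$-orthogonal projection of $\langle\psi^{\tau_j}_{s,e},\bmu\rangle_w\psi^{\tau_j}_{s,e}$ onto $\mathrm{span}\{\psi^b_{s,e}\}$.

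Your route needs no case split and covers every $b$ at once. It also does not take \eqref{eq:cusum-squarediff} as input: it uses only the single-change structure of $\bmu$ on $[s,e]$, so it shows why the identity holds. The paper's route is a mechanical verification that needs no linear-algebraic setup, but it has to be redone for each case.
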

\begin{proof}[Proof of \Cref{lemma:cusum-onechange}] For a general triplet $s\leq b< e$, it holds that  
\begin{equation}\label{eq:cusum-representation}
W_{s,e}^{\bm{\mu}}(b) = \begin{cases}
\kappa\sqrt{\frac{S^w_{s:b}}{S^w_{s:e}S^w_{(b+1):e}}}S^w_{(\tau_j+1):e}, & b \leq \tau_j; \\\kappa
\sqrt{\frac{S^w_{s:\tau_j}S^w_{(\tau_j+1):e}}{S^w_{s:e}}}, & b = \tau_j; \\
\kappa\sqrt{\frac{S^w_{(b+1):e}}{S^w_{s:e}S^w_{s:b}}}S^w_{s:\tau_j}, & b \geq \tau_j,
\end{cases}
\end{equation}
and then it is clear that $W_{s,e}^{\bm{\mu}}(b)$ is maximized at $b = \tau_j$ since all $w_i \geq 0$ for $i\in [N]$. Note that 
\[
\sqrt{\eta/2}\leq \sqrt{\frac{S^w_{s:\tau_j}S^w_{(\tau_j+1):e}}{S^w_{s:e}}} \leq \sqrt{\eta},
\]
which leads to the fact that $\sqrt{\eta}\kappa/\sqrt{2}\leq W_{s,e}^{\bmu}(\tau_j)\leq \sqrt{\eta}\kappa$. The expression for $(W_{s,e}^{\bm{\mu}}(\tau_j))^2- (W_{s,e}^{\bm{\mu}}(b))^2$ in \eqref{eq:cusum-squarediff} follows from direct calculation using \eqref{eq:cusum-representation}.

Finally, to see \eqref{eq:identity}, we consider first the case $b < \tau_j$, then 
\begin{equation}\label{eq:lemmaA1}
\|\psi_{s,e}^b\langle \psi_{s,e}^b,\bmu \rangle_w - \psi_{s,e}^{\tau_j}\langle \psi_{s,e}^{\tau_j},\bmu \rangle_w\|_w^2 = \sum_{i=s}^ew_ik_i^2
\end{equation}
where 
\[
k_i = \begin{cases}
\sqrt{\frac{S^w_{(b+1):e}}{S^w_{s:b}\, S^w_{s:e}}}\langle \psi_{s,e}^{b},\bmu \rangle_w - \sqrt{\frac{S^w_{(\tau_j+1):e}}{S^w_{s:\tau_j}\, S^w_{s:e}}}\langle \psi_{s,e}^{\tau_j},\bmu \rangle_w, & s\leq i\leq b; \\
-\sqrt{\frac{S^w_{s:b} }{S^w_{s:e}\,S^w_{(b+1):e}}}\langle \psi_{s,e}^{b},\bmu \rangle_w - \sqrt{\frac{S^w_{(\tau_j+1):e}}{S^w_{s:\tau_j}\, S^w_{s:e}}}\langle \psi_{s,e}^{\tau_j},\bmu \rangle_w, & b< i\leq \tau_j;\\
-\sqrt{\frac{S^w_{s:b} }{S^w_{s:e}\,S^w_{(b+1):e}}}\langle \psi_{s,e}^{b},\bmu \rangle_w +\sqrt{\frac{S^w_{s:\tau_j} }{S^w_{s:e}\,S^w_{(\tau_j+1):e}}}\langle \psi_{s,e}^{\tau_j},\bmu \rangle_w, & \tau_j< i\leq e.
\end{cases}
\]
Under the observation \eqref{eq:vec-cusum}, we can compute $\langle \psi_{s,e}^{\tau_j},\bmu \rangle_w$ and $\langle \psi_{s,e}^{b},\bmu \rangle_w$ using \eqref{eq:cusum-representation} which yields
\[
k_i = \begin{cases}
0, & s\leq i\leq b;\\
-(\mu_h-\mu_m)\frac{S^w_{(\tau_j+1):e}}{S^w_{(b+1):e}}, & b< i\leq \tau_j; \\
(\mu_h-\mu_m)\frac{S^w_{(b+1):\tau_j}}{S^w_{(b+1):e}}, & \tau_j< i\leq e.
\end{cases}
\]
Substituting the above in \eqref{eq:lemmaA1} gives
\[
\sum_{i=s}^e w_i k_i^2
=
\frac{S^w_{(b+1):\tau_j}S^w_{(\tau_j+1):e}}{S^w_{(b+1):e}}(\mu_h-\mu_m)^2
=
\frac{S^w_{(b+1):\tau_j}S^w_{(\tau_j+1):e}}{S^w_{(b+1):e}}\kappa^2,
\]
which is the claimed result when $b < \tau_j$. The other case can be obtained in the same way.  
\end{proof}

We first present the proof of \Cref{thm:cusum-hetero} and then comment on how \Cref{thm:cusum} directly follows from it. 
\begin{proof}[Proof of \Cref{thm:cusum-hetero}]

We follow the structure of the proof of Theorem~1 in \citet{baranowski2019narrowest}, which consists of \emph{Five Steps}, and make the necessary changes to accommodate the heterogeneity of $\sigma_i$ and our \textbf{WCP} algorithm. We need to modify the events considered in their \emph{Steps One and Two} as follows. Let $\bm{\mu}_{\psi} = \psi_{s,e}^b\langle \psi_{s,e}^b,\bmu \rangle_w - \psi_{s,e}^{\tau_j}\langle \psi_{s,e}^{\tau_j},\bmu \rangle_w$ and
\[
\mathcal{D}=\{(j,s,e,b): j=1,\dotsc,K,\ \tau_{j-1}<s\leq\tau_j,\ \tau_j<e\leq\tau_{j+1},\ s\leq b<e\}.
\]
Consider the following two events:
\begin{equation}\label{eq:prob-events}
\begin{aligned}
A &= \{\max_{s,b,e:1\leq s\leq b< e\leq N}|W_{s,e}^{\bm{\epsilon}}(b)|\leq r\},\\
B &= \Big\{\frac{|\langle \bm{\mu}_{\psi},\bm{\epsilon}\rangle_w|}{\|\bm{\mu}_{\psi}\|_w}\leq r,\ \forall (j,s,e,b)\in\mathcal{D}\Big\}.
\end{aligned}
\end{equation}
Note that using independence and sub-Gaussianity of the components of $\bm{\epsilon}$ with $\|\epsilon_i\|_{\psi_2}\leq \sigma_i = w_i^{-1/2}$, we have 
\[
\|\langle \bm{\epsilon},\psi_{s,e}^{b}\rangle_{w}\|_{\psi_2}^2 = \Big\|\sum_{i=1}^N{w_i\epsilon}_i\psi_{s,e}^{b}(i)\Big\|_{\psi_2}^2 \leq C \sum_{i=1}^N \Big\|w_i\epsilon_i\psi_{s,e}^{b}(i)\Big\|_{\psi_2}^2 \leq C,
\]
for some absolute constant $C$, where the first inequality is due to Proposition 2.7.1 in \citet{vershynin2018high}. Similarly, it also holds that 
\[
\bigg\|\frac{\langle \bm{\mu}_{\psi},\bm{\epsilon}\rangle_w}{\|\bm{\mu}_{\psi}\|_w}\bigg\|_{\psi_2} \leq C.
\]
Hence, using a union bound and tail bound for sub-Gaussian random variable \citep[e.g.][Proposition 2.6.6]{vershynin2018high}, we obtain 
\[
\PP(A^c) \leq N^3\exp(-cr^2)\quad \PP(B^c) \leq N^4\exp(-cr^2).
\]
Choosing $r = c'\sqrt{\log(N/\delta)}$ with a large enough absolute constant $c'>0$ ensures both $\PP(A^c)$ and $\PP(B^c)$ are less than $\delta$.

\emph{Step Three} is the key step showing the difference between the signal-to-noise conditions \eqref{eq:snr-thm-1} and \eqref{eq:snr-thm-2}. The main goal of this step is to show that when a large number of random intervals are generated, each change point $\tau_j$ is captured by some interval. However, under different conditions, we need to consider intervals at different ``scales.'' 

\textbf{Under the signal-to-noise condition in \eqref{eq:snr-thm-1}}, i.e.\ 
\[
(\mu_m-\mu_h)^2 \Delta_1 \gtrsim \sigma_{\max}^2\log(N/\delta) =\frac{\log(N/\delta)}{\min_{i\in[N]} w_i},
\]
where $\Delta_1 = \min_{j\in[K+1]}\{\tau_j-\tau_{j-1}\}$, we can consider intervals directly at the index scale, as in the proof of Theorem 1 in \citet{baranowski2019narrowest}. Let 
\[
\mathcal{I}^L_j = (\tau_j-\Delta_1/3, \tau_j-\Delta_1/6) \quad \mathcal{I}^R_j = (\tau_j+\Delta_1/6,\tau_j+\Delta_1/3),
\]
and $\mathcal{M} = \{[s_1,e_1],\dotsc,[s_M,e_M]\}$ be the randomly generated intervals. The following event 
\[
C = \{\forall j = 1\dotsc,K,\exists \;m \in \{1,\dotsc,M\}, \text{s.t.}\; s_m\in \mathcal{I}^L_j, e_m \in \mathcal{I}^R_j\},
\]
satisfies 
\[
\PP(C) \geq 1-\delta \quad \text{if} \quad M \geq 36N^2\Delta_1^{-2}\log(N(\delta\Delta_1)^{-1}),
\]
using the arguments in the original proof. 

\textbf{Under the signal-to-noise condition in \eqref{eq:snr-thm-2}}, i.e. 
\[
(\mu_m-\mu_h)^2 \Delta_2 \gtrsim \log(N/\delta),
\]
where $\Delta_2 = \min_{j\in[K+1]}S^w_{(\tau_{j-1}+1):\tau_j} = \min_{j\in[K+1]}\sum_{i = \tau_{j-1}+1}^{\tau_j}\sigma_i^{-2}$, we need to adapt the lengths of $\mathcal{I}^L_j$ and $\mathcal{I}^R_j$ to the scale of the weights. Specifically, with a slight abuse of notation we again let 
\begin{equation}\label{eq:interval-adjustment}
\begin{aligned}
\mathcal{I}^L_j &= \{s \in\{1,\dotsc,\tau_j\}:\Delta_2/6<S^w_{s:\tau_j}< \Delta_2/3\},\\
\mathcal{I}^R_j &= \{e \in\{\tau_j+1,\dotsc,N\}:\Delta_2/6<S^w_{(\tau_j+1):e}< \Delta_2/3\}.
\end{aligned}
\end{equation}
Note that the set $\mathcal{I}^L_j $ will not include any points $i \leq \tau_{j-1}$ since $S^w_{(\tau_{j-1}+1):\tau_j}\geq \Delta_2$. Similarly, $\mathcal{I}^R_j$ will also not include any points $i > \tau_{j+1}$. Moreover, these two sets are non-empty since $\max_{i\in[N]} w_i < \Delta_2/6$ by assumption, so that each set contains at least one element. 
Again, we let $[s_1,e_1],\dotsc,[s_M,e_M]$ be the randomly generated intervals and consider event $C$ as above. Note that for each interval $[s_m,\,e_m]$, we have $\PP(s_m\in \mathcal{I}^L_j, e_m\in \mathcal{I}^R_j) \geq 1/N^2$. Then, we can control
\[
\PP(C^c)\leq \sum_{j = 1}^K\prod_{m=1}^M(1-\PP(s_m\in \mathcal{I}^L_j, e_m\in \mathcal{I}^R_j))\leq K\Big(1-\frac{1}{N^2}\Big)^M \leq K\exp(-M/N^2).
\]
Therefore, choosing $M \geq N^2\log(K/\delta)$ ensures 
\[
\PP(C) \geq 1-\delta. 
\]
In the following, we work under the event $A\cap B\cap C$, which happens with probability at least $1-3\delta$.

In \emph{Step Four}, we aim to find an upper bound on the localization error. Consider the set of over-thresholding intervals 
\[
\mathcal{O}=\{m\in \mathcal{M}: \max_{s_m\leq b<e_m}W_{s_m,e_m}^{\bm{Y}}(b) > r\}.
\]
Using the event $A$ and \Cref{lemma:cusum-onechange}, we have that this set is non-empty under the weaker signal-to-noise condition \eqref{eq:snr-thm-2}. Therefore it is also non-empty under the condition \eqref{eq:snr-thm-1} since 
\[
\Delta_2 = \min_{j\in[K+1]}\sum_{i = \tau_{j-1}+1}^{\tau_j}\frac{1}{\sigma_i^2} \geq \frac{\Delta_1}{\sigma_{\max}^2}.
\]
There is a slight difference in how we select the shortest over-threshold interval, depending on whether we work under the condition \eqref{eq:snr-thm-1} or \eqref{eq:snr-thm-2}. This is related to the fact that the intervals are defined at different scales. In particular, we choose $m^*=\argmin_{m \in \mathcal{O}}(e_m-s_m+1)$ when assuming \eqref{eq:snr-thm-1} and $m^*=\argmin_{m \in \mathcal{O}}(S^w_{(s_m+1):e_m})$ when assuming \eqref{eq:snr-thm-2}. Such choices ensure that the corresponding intervals cannot contain more than one change point.

With a slight abuse of notation, we simply use $[s,e]$ to denote the interval $[s_{m^*},e_{m^*}]$. It can be shown using the same arguments as in \citet{baranowski2019narrowest} that this interval contains a single change point, which we denote as $\tau_j$, and $\min\{S^w_{s:\tau_j},S^w_{(\tau_j+1):e}\}> r^2/\kappa^2$. The estimator is $\widehat{\tau}_j = \widehat{b} = \argmax_{b \in [s,e]} W_{s,e}^{\bm{Y}}(b)$. The arguments below do not rely on the signal-to-noise conditions. To control the error between $\tau_j$ and $\widehat{b}$, note that 
\begin{align*}
(W_{s,e}^{\bm{Y}}(\tau_j))^2- (W_{s,e}^{\bm{Y}}(b))^2  &= (\langle \bm{Y},\psi_{s,e}^{\tau_j}\rangle_{w})^2 - (\langle \bm{Y},\psi_{s,e}^{b}\rangle_{w})^2 \\
&= (\langle \bm{\mu},\psi_{s,e}^{\tau_j}\rangle_{w})^2 - (\langle \bm{\mu},\psi_{s,e}^{b}\rangle_{w})^2 + (\langle \bm{\epsilon},\psi_{s,e}^{\tau_j}\rangle_{w})^2 - (\langle \bm{\epsilon},\psi_{s,e}^{b}\rangle_{w})^2 \\ &\hspace{6em}+ 2 \langle\bm{\epsilon},\psi_{s,e}^{\tau_j}\langle\psi_{s,e}^{\tau_j},\bm{\mu}\rangle_w-\psi_{s,e}^{b}\langle\psi_{s,e}^{b},\bm{\mu}\rangle_w\rangle. 
\end{align*}
Consider first $b \geq \tau_j$, and the other case can be dealt with similarly. Using \Cref{lemma:cusum-onechange} together with the identity \eqref{eq:vec-cusum}, we have 
\[
(\langle \bm{\mu},\psi_{s,e}^{\tau_j}\rangle_{w})^2 - (\langle \bm{\mu},\psi_{s,e}^{b}\rangle_{w})^2 = \frac{S^w_{(\tau_j+1):b}S^w_{s:\tau_j}}{S^w_{s:b}}\kappa^2.
\]
Using the bounds in events $A$ and $B$ \eqref{eq:prob-events}, we further have 
\begin{align*}
(\langle \bm{\epsilon},\psi_{s,e}^{\tau_j}\rangle_{w})^2 - (\langle \bm{\epsilon},\psi_{s,e}^{b}\rangle_{w})^2 &\leq r^2
\\
2 \langle\bm{\epsilon},\psi_{s,e}^{\tau_j}\langle\psi_{s,e}^{\tau_j},\bm{\mu}\rangle_w-\psi_{s,e}^{b}\langle\psi_{s,e}^{b},\bm{\mu}\rangle_w\rangle
&\leq 2r\|\psi_{s,e}^{\tau_j}\langle\psi_{s,e}^{\tau_j},\bm{\mu}\rangle_w-\psi_{s,e}^{b}\langle\psi_{s,e}^{b},\bm{\mu}\rangle_w\|_w\\
&= 2r\sqrt{\frac{S^w_{(\tau_j+1):b}S^w_{s:\tau_j}}{S^w_{s:b}}}\kappa . 
\end{align*}
Notice that if 
\[
\frac{2S^w_{(\tau_j+1):b}S^w_{s:\tau_j}}{S^w_{s:b}}\kappa^2  = \frac{2(S^w_{s:b}-S^w_{s:\tau_j})S^w_{s:\tau_j}}{S^w_{s:b}}\kappa^2 \geq \min\{S^w_{s:\tau_j},S^w_{(\tau_j+1):b}\}\kappa^2 > r^2,
\]
then $(W_{s,e}^{\bm{Y}}(\tau_j))^2- (W_{s,e}^{\bm{Y}}(b))^2 > 0$.
Since $S^w_{s:\tau_j} > r^2/\kappa^2$, we have that the estimator $\widehat{b}$ must satisfy 
\[
S^w_{(\tau_j+1):\widehat{b}} = \sum_{i = \tau_j+1}^{\widehat{b}}w_i =\sum_{i = \tau_j+1}^{\widehat{b}} \frac{1}{\sigma_i^2} \leq \frac{r^2}{\kappa^2},
\]
as, otherwise, it cannot be the maximizer of $W_{s,e}^{\bm{Y}}(b)$.
For $b < \tau_j$, following the same arguments and using the fact $S^w_{(\tau_j+1):e} > r^2/\kappa^2$, we would obtain 
\[
\sum_{\widehat{b}+1}^{\tau_j}\frac{1}{\sigma_i^2}\leq \frac{r^2}{\kappa^2}.
\]
Therefore, combining these two cases, we obtain 
\begin{equation*}
\sum_{\min{\{\tau_j,\widehat{b}}\}+1}^{\max\{\tau_j,\widehat{b}\}}\frac{1}{\sigma_i^2}\leq \frac{r^2}{\kappa^2}.
\end{equation*}
Finally, we note that we omit the details of Step Five, which deals with the recursive aspect of the algorithm and shows that the true number of change points $K$ is recovered. This step in our case follows directly by using similar adjustments to the intervals $[s_k,e_k]$ as \eqref{eq:interval-adjustment}, and incurring a slightly modified version of Lemma~3 in \citet{baranowski2019narrowest} with $\tau_j-s+1$ replaced by $S^w_{s:\tau_j}$ and $e-\tau_j$ replaced by $S^w_{(\tau_j+1):e}$.
\end{proof}

\begin{proof}[Proof of \Cref{thm:cusum}]
Note that as long as $w_i = c$ for some $c>0$, it holds that 
\begin{align*}
W_{s,e}^{\bm{v}}(b) &= \sqrt{\frac{S^w_{s:b} \,S^w_{(b+1):e}}{S^w_{s:e}}}\Bigg|\sum_{i=s}^b {\frac{w_i}{S^w_{s:b}}}v_i
- \sum_{i=b+1}^{e}{\frac{w_i}{S^w_{(b+1):e}}}v_i\Bigg| \\&= \sqrt{c}\sqrt{\frac{(b-s+1)(e-b)}{e-s+1}} \Bigg|\frac{\sum_{i = s }^b v_i}{b-s+1} - \frac{\sum_{i = b + 1}^{e} v_i}{e-b} \Bigg| =  \sqrt{c} C_{s,e}^{\bm{v}}(b).
\end{align*}
Moreover, choosing $c = 1/\sigma_{\max}^2 = \min_{i\in [N]} w_i$ ensures that the high-probability events $A$ and $B$ in the proof of \Cref{thm:cusum-hetero} still hold with the same choice of $r$. Therefore the conclusion of \Cref{thm:cusum-hetero} still holds but only with $w_i = 1/\sigma_{\max}^2$. Substituting this choice into the bound \eqref{eq:weighted-localisation}, we obtain the claimed result 
\[
\max_{j\in [K]}|\widehat{\tau}_j - \tau_j| \lesssim \frac{\sigma_{\max}^2\log(N/\delta)}{(\mu_m-\mu_h)^2}.
\]
\end{proof}

\begin{proof}[Proof of \Cref{lemma:lowerbound}] Fix a $\delta \in (0,1/2)$.
We start by considering the following two distributions $P_1$ and $P_2$ both belonging to $\mathcal{P}$, where $P_1$ is the joint distribution $N$ independent random variables such that 
\[
Y_i \sim N(\mu_m,\sigma_i^2) \quad i \leq \tau, \qquad Y_i \sim N(\mu_h,\sigma_i^2) \quad i > \tau. 
\]
$P_2$ denotes the joint distribution $N$ independent random variables such that
\[
Y_i \sim N(\mu_m,\sigma_i^2) \quad i \leq \tau+h_1, \qquad Y_i \sim N(\mu_h,\sigma_i^2) \quad i > \tau+h_1.
\]
Let $\kappa = |\mu_m-\mu_h|$ and then $\KL(P_1,P_2) = \sum_{i=\tau+1}^{\tau+h_1}\frac{\kappa^2}{\sigma_i^2}$. Now, applying \citet[][Corollary 6]{ma2024high}, a version of Le Cam's two-point lemma for high-probability lower bounds, we obtain that if $h_1$ is chosen such that 
\[
\sum_{i=\tau+1}^{\tau+h_1}\frac{\kappa^2}{\sigma_i^2} \leq \log(\frac{1}{4\delta(1-\delta)}),
\]
then $ \inf_{\widehat{\tau}} \sup_{P\in \mathcal{P}}Q(\delta,\widehat{\tau},P) \gtrsim h_1$. Therefore, we can choose it to be $\max\{h_1\in [N-\tau]:\sum_{i=\tau+1}^{\tau+h_1}\frac{\kappa^2}{\sigma_i^2} \leq \log(\frac{1}{4\delta(1-\delta)})\}$ and this value is guaranteed to be less than $N-\tau$ since $\kappa^2S^w_{(\tau+1):N} \gtrsim \log(1/\delta)$, as required in $\mathcal{P}$. The other lower bound can be obtained by considering $P_1$ and $P_3$ belonging to $\mathcal{P}$, where $P_3$ is the joint distribution of $N$ independent random variables such that 
\[
Y_i \sim N(\mu_m,\sigma_i^2) \quad i \leq \tau-h_2-1, \qquad Y_i \sim N(\mu_h,\sigma_i^2) \quad i \geq \tau-h_2. 
\]
Then we have $\KL(P_1,P_3) = \sum_{i=\tau-h_2}^\tau\frac{\kappa^2}{\sigma_i^2}$. The same arguments as before implies that if $h_2$ satisfies 
\[
\sum_{i=\tau-h_2}^{\tau}\frac{\kappa^2}{\sigma_i^2} \leq \log(\frac{1}{4\delta(1-\delta)}),
\]
then $ \inf_{\widehat{\tau}} \sup_{P\in \mathcal{P}}Q(\delta,\widehat{\tau},P) \gtrsim  h_2$. We take $\max\{h_2\in [\tau]:\sum_{i=\tau-h_2}^{\tau}\frac{\kappa^2}{\sigma_i^2} \leq \log(\frac{1}{4\delta(1-\delta)})\}$ and this value is guaranteed to be less than $\tau$ since $\kappa^2S^w_{1:\tau}\gtrsim\log(1/\delta)$, as required in $\mathcal{P}$. Combining these two cases yields our claim. 
\end{proof}

\section{Extensions to dependent scores and unknown score variances}\label{sec:extension}

In this section, we generalise the theoretical guarantees regarding WCP. 
Recall the setup in \Cref{sec:theory}, where there are $K$ change points $\{\tau_j\}_{j\in [K]}$ in a paragraph $\bm{X}=(X_1,\dotsc,X_N)$. Let $\mu_h = \mathbb{E}[\phi(X_h)]$ and $\mu_m = \mathbb{E}[\phi(X_m)]$ denote the expected scores for a human-authored sentence $X_h$ and an LLM-authored sentence $X_m$, respectively.

\begin{assumption}\label{asp:extension}
We let $Y_i = \phi(X_i) = \mu_i+\epsilon_i, i\in [N]$ denote the scores on the segment $X_i$. Suppose there are $K$ change points $\{\tau_j\}_{j\in [K]}$ in a paragraph $\bm{X}=(X_1,\dotsc,X_N)$. Assume the following:
\begin{enumerate}[leftmargin=*]
\item (\textbf{Unknown $
\sigma_i^2$}) Suppose that $\sigma_i^2$ are unknown but we have access to estimators $\hat{\sigma}_i^2$ that are independent of $\bm{X}$ such that with probability at least $1-\delta$
\begin{equation}\label{eq:unknown-weights}
\mathcal{E} = \{\max_{i = 1,\dotsc,N}|\hat{\sigma}^2_i/\sigma_i^2 - 1|\leq R\},
\end{equation}
this event happens, for some $R \in (0,1)$.   
\item (\textbf{Dependence}) The random variables $Z_i = \epsilon_i/\sigma_i$ has mean zero and $|Z_i| \leq B \asymp 1$. For $1\leq i < j \leq N$, consider the total-variation dependence measure as in \citep[e.g.][]{samson2000concentration,kontorovich2008concentration}. Let 
\[
{\eta}_{ij}
=
\sup_{z_{1:i-1},\,z,z'}
\left\|
\mathcal{L}\!\left(
Z_{j:N}\mid Z_{1:i-1}=z_{1:i-1},\, Z_i=z
\right)
-
\mathcal{L}\!\left(
Z_{j:N}\mid Z_{1:i-1}=z_{1:i-1},\, Z_i=z'
\right)
\right\|_{\mathrm{TV}},
\]
where $\mathcal{L}(Z\mid Y=y)$ denotes the conditional distribution of $Z$ given $Y=y$. Assume that ${\eta}_{ij} \leq \rho(j-i)$ for some $\rho$ and 
\[
\Lambda_{\rho}
=
1+\sum_{k=1}^{\infty}\sqrt{\rho(k)}
<\infty. 
\]   
\end{enumerate}
\end{assumption}

\begin{proof}[Proof of \Cref{thm:extension}]
Throughout the proof, we shall work under event $\mathcal{E}$. 
The main modification compared to the proof of \Cref{thm:cusum-hetero} is in the concentration inequality step.  Recall the two events that we need to control are
\begin{equation}\label{eq:prob-events-2}
\begin{aligned}
A &= \{\max_{s,b,e:1\leq s\leq b< e\leq N}|W_{s,e}^{\bm{\epsilon}}(b)|\leq r\},\\
B &= \Big\{\frac{|\langle \bm{\mu}_{\psi},\bm{\epsilon}\rangle_w|}{\|\bm{\mu}_{\psi}\|_w}\leq r,\ \forall (j,s,e,b)\in\mathcal{D}\Big\}.
\end{aligned}
\end{equation}
We essentially apply the concentration inequality \cite[][eq.\ (2.23)]{samson2000concentration} that allows dependence across random variables. Let $ V_i=\frac{Z_i}{B}$ and $b_R=\frac{1}{1-R}$ so that under event $\mathcal{E}$, $ w_i\le b_R\sigma_i^{-2}.$

Let \(\Gamma_N\) be the upper triangular matrix with $1$ on the diagonal and $\sqrt{\eta_{ij}}$ for $i <j$, and note that 
\[
\|\Gamma_N\|_{\mathrm{op}} \leq 1+\sum_{k=1}^\infty\sqrt{\rho(k)} \leq \Lambda_{\rho}.
\]

The concentration inequality \cite[][eq.\ (2.23)]{samson2000concentration} implies for weight vectors \(b_i\),
\[
\mathbb P\left(
\left|
\sum_{i=1}^N V_i b_i
-
\mathbb E\sum_{i=1}^N V_i b_i
\right|
\ge t
\right)
\le
\exp\left\{
-\frac{t^2}{8\sigma_*^2(b)\Lambda_{\rho}^2}
\right\},
\]
where
\[
\sigma_*^2(b)
\leq
\sum_{i=1}^N b_i^2 .
\]

For event \(A\), fix \((s,b,e)\), we have
\[
W_{s,e}^{\epsilon}(b)
=
\left|
\sum_{i=1}^N
w_i\sigma_i
\psi^b_{s,e}(i) Z_i
\right| =  \left|
\sum_{i=1}^N
V_i b_i^A
\right|,
\]
where $ b_i^A
=
B w_i\sigma_i\psi^b_{s,e}(i)$, and $\mathbb{E}V_i = 0$ if the variance estimators $\hat{\sigma}_i^2$ are independent of the noise.  Then, it holds that
\[
\begin{aligned}
\sum_{i=1}^N
\left(B w_i\sigma_i
\psi^b_{s,e}(i)\right)^2 &=
B^2
\sum_{i=1}^N
(w_i\sigma_i^2)
w_i
\{\psi^b_{s,e}(i)\}^2 \le
B^2 b_R
\sum_{i=1}^N
w_i
\{\psi^b_{s,e}(i)\}^2
=
B^2b_R ,
\end{aligned}
\]
where we use
\[
w_i\sigma_i^2\le b_R,
\qquad
\sum_{i=1}^N
w_i\{\psi^b_{s,e}(i)\}^2=1.
\]

Similarly, for event \(B\), we have for any $u$
\[
\frac{
|\langle u,\epsilon\rangle_{w}|
}{
\|u\|_{w}
}
=
\left|
\sum_{i=1}^N
\frac{
w_i\sigma_i u_i
}{
\left(\sum_{\ell=1}^N w_\ell u_\ell^2\right)^{1/2}
}
Z_i
\right| = \left|
\sum_{i=1}^N
V_i b_i^{B}
\right|,
\]
where $ b_i^{B}
=
B
\frac{
w_i\sigma_i u_i
}{
\left(\sum_{\ell=1}^N w_\ell u_\ell^2\right)^{1/2}
},$ and 
\[
\begin{aligned}
\sum_{i=1}^N (b_i^{B})^2
=
B^2
\frac{
\sum_{i=1}^N
w_i^2\sigma_i^2u_i^2
}{
\sum_{\ell=1}^N w_\ell u_\ell^2
}  
=
B^2
\frac{
\sum_{i=1}^N
(w_i\sigma_i^2)w_i u_i^2
}{
\sum_{\ell=1}^N w_\ell u_\ell^2
}  
\le
B^2b_R
\frac{
\sum_{i=1}^N w_i u_i^2
}{
\sum_{\ell=1}^N w_\ell u_\ell^2
}
=
B^2b_R .
\end{aligned}
\]

Therefore, we can choose 
\[
r
\asymp
B \Lambda_{\rho}
\sqrt{
\frac{\log(N/\delta)}{1-R}
},         
\]
to ensure $\mathbb{P}(A^c)$ and $\mathbb{P}(B^c)$ are less than $\delta$.

For Step 3, since we work under the event $\mathcal{E}$, we have $a_RS^{\sigma}_{u:v}\leq S^w_{u:v} \leq b_R S^{\sigma}_{u:v}$, where $a_R = 1/(1+R)$, $b_R = 1/(1-R)$ and $S^{\sigma}_{u:v} = \sum_{i=u}^v\sigma_i^{-2}$. Therefore the signal-to-noise condition in \eqref{eq:snr-2} implies 
\[
(\mu_m-\mu_h)^2 \hat{\Delta}_2 \gtrsim r,
\]
with $\hat{\Delta}_2 = \min_{j\in[K+1]}S^w_{(\tau_{j-1}+1):\tau_j}$. Once this condition is satisfied, following the same arguments as in the proof of \Cref{thm:cusum-hetero}, we would end with 
\begin{equation*}
\sum_{\min{\{\tau_j,\widehat{b}}\}+1}^{\max\{\tau_j,\widehat{b}\}}\frac{1}{\hat{\sigma}_i^2}\leq \frac{r^2}{\kappa^2}.
\end{equation*}
Then, again incurring the property in the event $\mathcal{E}$, we have 
\[
\sum_{\min{\{\tau_j,\widehat{b}}\}+1}^{\max\{\tau_j,\widehat{b}\}}\frac{1}{{\sigma}_i^2}\leq \frac{r^2}{a_R\kappa^2} \asymp \frac{1+R}{1-R}B^2\Lambda_\rho^2\frac{\log(N/\delta)}{\kappa^2}.
\]
\end{proof}

\section{Generalized change point detection}\label{sec:textCP}
Another natural approach to enhance the performance of the vanilla algorithm is to replace the average of scores $\phi(X_t)$ in CUSUM with segment-level scores $\phi(X_{t_1:t_2})$, obtained by applying, for example, the FastDetectGPT statistic to the concatenated text $X_{t_1:t_2}$. This yields the following generalized CUSUM statistic: 
\begin{equation}\label{eq:segmentwise}
G_{s,e}^{\bm{X}}(t) = \sqrt{\frac{S^w_{s:t} \, S^w_{(t+1):e}}{S^w_{s:e}}} \big|\phi(X_{s:t}) - \phi(X_{(t+1):e})\big|.
\end{equation}
Substituting $A_{s,e}^{\bm{X}}(b)$ in \Cref{alg:not_algorithm} with our newly defined $G_{s,e}^{\bm{X}}(b)$, and $B(s,e)$ with $S^w_{s:e}$, yields the resulting algorithm, which we refer to as \textbf{GCP}, short for \underline{G}eneralized \underline{C}hange \underline{P}oint detection.

Compared to the standard CUSUM statistic in \eqref{eq:CUSUM}, \eqref{eq:segmentwise} offers two advantages: (i) It aggregates information across multiple sentences, enabling the LLM detector to achieve higher classification accuracy than detecting each sentence individually. (ii) It implicitly incorporates sentence-specific attributes, such as sentence length, into the resulting score, since longer sentences likely exert a larger influence on the segment-level score. While being more general than the weighted version in \eqref{eq:weighted_CUSUM}, we show later in \Cref{prop:equivalence} that \eqref{eq:segmentwise} can be equivalent to \eqref{eq:weighted_CUSUM} under specific choices of $\phi$.

However, one limitation of GCP is its computational cost. In particular, implementing \Cref{alg:not_algorithm} with the generalized CUSUM statistic requires computing the score function $\phi$ on various segments of $\bm{X}$, due to the recursive nature of the algorithm. In contrast, VCP and WCP only require a single pass of $\phi$ over each individual sentence $X_i$. 

In the next section, we establish the equivalence between WCP and GCP under appropriate conditions, so that the minimax optimality of WCP applies to GCP as well.

\subsection{Equivalence of GCP and WCP}\label{sec:equivalence}
In this section, we show the equivalence between \eqref{eq:segmentwise} and \eqref{eq:weighted_CUSUM} when $\phi$ takes certain general forms that are used in the literature \citep[e.g.][]{mitchell2023detectgpt,bao2024fastdetectgpt,zhou2025adadetectgpt}. Let $Z_t= \mathcal{R}(X_t)$ denote the text rewritten by some LLM given input text $X_t$,  $X_{t,i}$ denote the $i$-th token in the $t$-th sentence, and $X_{t,<i}$ denote all the tokens before $i$-th token in the $t$-th sentence. Assume $Z_t$ and $X_t$ share the same number of tokens $n_t$, which can be achieved by zero-padding the shorter sequence. Consider the detector $\phi$ that takes the form of either a zero-shot detection statistic
\[
\phi_1(X_t) = \frac{1}{n_t}\log p_{\widehat{\varphi}}(X_t),
\]
where $p_{\widehat{\varphi}}$ is some possible source model to be detected, or a ML-based detection statistic
\[
\phi_2(X_t) = \frac{1}{n_t}\log\frac{p_{{\widehat\varphi}}(X_t)}{p_{\widehat{\varphi}}(Z_t)} =\frac{1}{n_t}\sum_{i=1}^{n_t}\log\frac{p_{\widehat{\varphi}}(X_{t,i}|X_{t,<i})}{p_{\widehat{\varphi}}(Z_{t,i}|Z_{t,<i})},
\]
where $p_{\widehat{\varphi}}$ is a classifier trained to maximally distinguish $X_t$ and $Z_t$.
\begin{proposition}\label{prop:equivalence}
Under the above choices of detection statistics $\phi$, if for any input text $\bm{X}$ with $N$ sentences, where each sentence $X_i$ has $n_i$ tokens, it holds that 
\begin{equation}\label{eq:equi-condition}
p_{\widehat{\varphi}}(X_{i,j}|X_{i,<j}) = p_{\widehat{\varphi}}(X_{i,j}|X_{<i,<j}), 
\end{equation}
for $i \in [N], j\in [n_i]$, 
then 
\[
G_{s,e}^{\bm{X}}(b) = W_{s,e}^{\bm{Y}}(b), \quad \forall b\in[N],
\]
with $Y = (\phi(X_1),\dotsc,\phi(X_N))$ and $w_i = n_i$.
\end{proposition}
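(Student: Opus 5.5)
The identity will follow from two facts. First, under \eqref{eq:equi-condition} the segment-level score of a concatenation is the token-weighted average of the sentence-level scores. Second, the prefactor $\sqrt{S^w_{s:t}S^w_{(t+1):e}/S^w_{s:e}}$ is literally the same in \eqref{eq:segmentwise} and \eqref{eq:weighted_CUSUM} once we take $w_i=n_i$. So the whole task reduces to proving
\[
\phi(X_{t_1:t_2}) = \frac{1}{S^w_{t_1:t_2}}\sum_{i=t_1}^{t_2} n_i\,\phi(X_i) = \overline{Y}^w_{t_1:t_2}, \qquad w_i=n_i,
\]
for every $1\le t_1\le t_2\le N$. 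Applying this to the intervals $[s,b]$ and $[b+1,e]$ and substituting into \eqref{eq:segmentwise} then gives $G^{\bm{X}}_{s,e}(b)=W^{\bm{Y}}_{s,e}(b)$ for every $b$.

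For $\phi_1$, I would factor the likelihood of the concatenated text $X_{t_1:t_2}$ by the chain rule over all of its tokens. The concatenated text has $\sum_{i=t_1}^{t_2}n_i = S^w_{t_1:t_2}$ tokens, and the conditioning context of token $X_{i,j}$ is all preceding sentences in the segment together with $X_{i,<j}$. I read the paper's notation $X_{<i,<j}$ as denoting this full prefix. Condition \eqref{eq:equi-condition} says that this conditional equals $p_{\widehat\varphi}(X_{i,j}\mid X_{i,<j})$, so
\[
\log p_{\widehat\varphi}(X_{t_1:t_2}) = \sum_{i=t_1}^{t_2}\sum_{j=1}^{n_i}\log p_{\widehat\varphi}(X_{i,j}\mid X_{i,<j}) = \sum_{i=t_1}^{t_2}\log p_{\widehat\varphi}(X_i) = \sum_{i=t_1}^{t_2} n_i\phi_1(X_i).
\]
Dividing by the total token count $S^w_{t_1:t_2}$ gives the claim.

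For $\phi_2$, the argument is identical applied separately to the numerator and denominator of the log-ratio. The rewritten text of the segment is the concatenation $Z_{t_1:t_2}=(\mathcal R(X_{t_1}),\dots,\mathcal R(X_{t_2}))$ of sentence-wise rewrites, each padded to length $n_i$. Since the condition holds for any input text, it applies in particular to $Z$, so the denominator also factorizes sentence by sentence. Hence $\phi_2(X_{t_1:t_2}) = (S^w_{t_1:t_2})^{-1}\sum_i n_i\phi_2(X_i)$.

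The main obstacle is interpretive rather than computational. For $\phi_2$ one must justify that the segment-level rewrite is the concatenation of the sentence-level rewrites, with the same padding, so that token counts match. The condition must also be applied to $Z$ as well as to $X$. I would state the first point explicitly as part of the setup, that $\mathcal R$ acts sentence-wise. The second is covered by the phrase ``for any input text $\bm X$''. Once these are fixed, the remaining step is just cancelling the common prefactor.
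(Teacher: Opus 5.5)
Your proposal is correct and follows the paper's proof. Both arguments use the condition to make the chain-rule factorisation of the concatenated segment split sentence by sentence. This makes the segment score the $n_i$-weighted average of the sentence scores, and the prefactors coincide once $w_i=n_i$. You are also more explicit than the paper about assumptions its proof uses without stating them: that the segment-level rewrite is the concatenation of the sentence-wise rewrites, and that the condition is applied to $Z$ and to the sub-text $X_{t_1:t_2}$ as input texts in their own right.
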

\begin{proof}[Proof of \Cref{prop:equivalence}] We directly prove the case for $\phi_2$, and the case for $\phi_1$ is a special case by setting $p_{\widehat{\varphi}}(Z_t) = 1$. Under $\phi_2$, we have 
\begin{align*}
G_{s,e}^{\bm{X}}(b)= \sqrt{\frac{S^w_{s:b} \,S^w_{(b+1):e}}{S^w_{s:e}}}  &\Bigg|\frac{1}{S^w_{s:b}}\sum_{i=s}^b\sum_{j=1}^{n_i}\log\frac{p_{\widehat{\varphi}}(X_{i,j}|X_{<i,<j})}{p_{\widehat{\varphi}}(Z_{i,j}|Z_{<i,<j})} - \frac{1}{S^w_{(b+1):e}}\sum_{i=b+1}^e\sum_{j=1}^{n_i}\log\frac{p_{\widehat{\varphi}}(X_{i,j}|X_{<i,<j})}{p_{\widehat{\varphi}}(Z_{i,j}|Z_{<i,<j})}\Bigg| \\
W_{s,e}^{\bm{Y}}(b) = \sqrt{\frac{S^w_{s:b} \,S^w_{(b+1):e}}{S^w_{s:e}}}  &\Bigg|\frac{1}{S^w_{s:b}}\sum_{i=s}^b\sum_{j=1}^{n_i}\log\frac{p_{\widehat{\varphi}}(X_{i,j}|X_{i,<j})}{p_{\widehat{\varphi}}(Z_{i,j}|Z_{i,<j})} - \frac{1}{S^w_{(b+1):e}}\sum_{i=b+1}^e\sum_{j=1}^{n_i}\log\frac{p_{\widehat{\varphi}}(X_{i,j}|X_{i,<j})}{p_{\widehat{\varphi}}(Z_{i,j}|Z_{i,<j})}\Bigg|
\end{align*}
Therefore, under the assumption that for any $i \in [N]$ and $j\in [n_i]$, 
\[
p_{\widehat{\varphi}}(X_{i,j}|X_{i,<j}) = p_{\widehat{\varphi}}(X_{i,j}|X_{<i,<j}),
\]
where $X$ is the input text, 
it holds that $W_{s,e}^{\bm{Y}}(b) = D_{s,e}^{\bm{X}}(b)$ for all $b \in [N]$.
\end{proof}

\section{Additional numerical results}\label{sec:addition-numerical}
\setcounter{table}{0}
\renewcommand{\thetable}{A\arabic{table}}

\subsection{Table on real-data analysis}
\begin{table}[H]
\centering
\vspace{-12pt}
\caption{Results on the real-world CoAuthor dataset.}
\label{tab:real-data-coauthor}
\setlength{\tabcolsep}{3pt}
\small
\begin{tabular}{lcc}
\toprule
& WD & CE \\
\midrule
TextTiling & 0.66 & -6.76 \\
LLMPred & 0.64 & -6.25 \\
SenPred & 0.70 & -11.68 \\
Voting & 0.51 & -2.65 \\
SegFormer & 0.39 & \textbf{2.45} \\
VCP & \textbf{0.36} & 2.63 \\
WCP & \textbf{0.36} & 2.71 \\
\bottomrule
\end{tabular}
\end{table}

\subsection{Results on multiple change points}\label{sec:simu-multi-cp}
In this part, we consider settings with multiple change points where the number of change points $K$ varies within $\{1,2,3,5,8\}$. For each document, we evenly split the text into $K+1$ segments, each containing approximately the same number of sentences. The 1st, 3rd, 5th, $\cdots$ segments are unchanged, while the remaining segments are written by an LLM. PaLD is not included in this or subsequent experiments due to its computational cost. 

\begin{table}[htbp]
\centering
\caption{Results on multiple change-point detection on the Story dataset. The best results are presented in \textbf{bold}.}
\label{tab:multiple_cp}
\small
\setlength{\tabcolsep}{3pt}
\renewcommand{\arraystretch}{1.1}
\resizebox{\textwidth}{!}{
\begin{tabular}{llcccccccccc}
\toprule
\multirow{2}{*}{Model} & \multirow{2}{*}{Method}
& \multicolumn{2}{c}{$K=1$}
& \multicolumn{2}{c}{$K=2$}
& \multicolumn{2}{c}{$K=3$}
& \multicolumn{2}{c}{$K=5$}
& \multicolumn{2}{c}{$K=8$} \\
\cmidrule(lr){3-4}\cmidrule(lr){5-6}\cmidrule(lr){7-8}\cmidrule(lr){9-10}\cmidrule(lr){11-12}
& & WD & CE & WD & CE & WD & CE & WD & CE & WD & CE \\
\midrule
\multirow{6}{*}{Claude 4.5}
& TextTiling & 0.75 & -2.68 & 0.77 & -3.80 & 0.72 & -4.47 & 0.64 & -4.64 & 0.53 & -3.06 \\
& LLMPred   & 1.42 & -4.24 & 1.23 & -4.68 & 1.05 & -4.43 & 0.52 & 2.27  & 0.49 & 4.96 \\
& SenPred   & 2.18 & -8.70 & 2.24 & -13.80 & 2.19 & -17.91 & 1.89 & -23.38 & 1.31 & -24.68 \\
& Voting    & 0.38 & -1.23 & 0.45 & -2.19 & 0.46 & -2.78 & 0.38 & -2.71 & \textbf{0.33} & -2.56 \\
& VCP       & 0.31 & -1.73 & 0.33 & -2.81 & \textbf{0.31} & -2.90 & \textbf{0.32} & {-3.35} & \textbf{0.33} & -1.22 \\
& WCP       & \textbf{0.26} & \textbf{-1.11} & \textbf{0.29} & \textbf{-1.01} & \textbf{0.31} & \textbf{-1.13} & 0.33 & \textbf{-0.91} & \textbf{0.33} & \textbf{0.68} \\
\midrule
\multirow{6}{*}{GPT-5-mini}
& TextTiling & 0.96 & -3.47 & 1.05 & -5.34 & 0.91 & -6.10 & 0.75 & -6.50 & 0.61 & -5.20 \\
& LLMPred   & 1.34 & -4.02 & 1.20 & -4.74 & 1.12 & -5.06 & 0.56 & 1.45  & 0.51 & 4.46 \\
& SenPred   & 2.68 & -10.94 & 2.89 & -17.90 & 2.92 & -24.38 & 2.42 & -30.43 & 1.79 & -33.54 \\
& Voting    & 0.40 & -0.50 & 0.47 & -0.28 & 0.46 & 0.86 & 0.82 & -7.60 & 0.65 & -6.39 \\
& VCP       & 0.41 & \textbf{-0.47} & 0.50 & -1.64 & 0.51 & -2.01 & 0.51 & -1.52 & 0.46 & \textbf{0.65} \\
& WCP       & \textbf{0.39} & {-0.88} & \textbf{0.41} & \textbf{0.05} & \textbf{0.43} & \textbf{0.24} & \textbf{0.44} & \textbf{0.76} & \textbf{0.43} & 1.91 \\
\bottomrule
\end{tabular}
}
\end{table}

Table~\ref{tab:multiple_cp} shows that WCP remains competitive as the number of change points increases. Across both LLM generators, WCP achieves the best or near-best WD in most settings. VCP also performs strongly because it uses the same change-point structure, but its larger WD in many settings indicates that unweighted segmentation is less powerful when sentence-level detection scores are heterogeneous. In contrast, the baseline methods often incur larger WD or CE, which is consistent with the single-change-point results.

\subsection{Sensitivity analysis}\label{sec:simu-sensitive}
We further study two data-generation factors that may affect localization performance: adversarial attacks on the text and the proportion of LLM-written content.

\paragraph{Adversarial attack.} We consider two adversarial settings: (1) decoherence and (2) paraphrasing. For decoherence, we perturb LLM-generated sentences to reduce their coherence, making them closer to human-written text that may not be perfectly coherent. Specifically, we use a lightweight attack that randomly swaps one adjacent word pair within each LLM-generated sentence. For paraphrasing, we perturb human-written sentences by using a language model to rephrase them, making them closer to LLM-generated text. Results under the two attacks are presented in Table~\ref{tab:adversarial-attack}. 

Although these perturbations increase the difficulty of localization, WCP generally achieves the best performance across the two adversarial settings. Its WD also increases more mildly than those of the baselines, suggesting that the change-point formulation remains useful when sentence-level detection signals are weakened.

\begin{table}[htbp]
\centering
\caption{Results on two adversarial attack setups.}
\label{tab:adversarial-attack}
\small
\setlength{\tabcolsep}{5pt}
\renewcommand{\arraystretch}{1.1}
\begin{tabular}{llcccccc}
\toprule
\multirow{2}{*}{Setup} & \multirow{2}{*}{Method}
& \multicolumn{2}{c}{WikiQA}
& \multicolumn{2}{c}{News}
& \multicolumn{2}{c}{Story} \\
\cmidrule(lr){3-4}\cmidrule(lr){5-6}\cmidrule(lr){7-8}
& & WD & CE & WD & CE & WD & CE \\
\midrule
\multirow{5}{*}{decoherence}
& TextTiling & 0.418 & -0.465 & 1.320 & -5.110 & 0.969 & -3.480 \\
& LLMPred & 0.490 & -0.687 & 1.193 & -3.740 & 1.217 & -3.450 \\
& SenPred & 0.696 & -2.172 & 2.941 & -12.150 & 2.892 & -11.800 \\
& Voting & 0.441 & -0.657 & 0.950 & -3.480 & 0.929 & -3.390 \\
& WCP & \textbf{0.370} & \textbf{-0.010} & \textbf{0.380} & \textbf{0.000} & \textbf{0.354} & \textbf{0.000} \\
\midrule
\multirow{5}{*}{paraphrasing}
& TextTiling & 0.417 & -0.485 & 1.313 & -5.130 & 0.957 & -3.430 \\
& LLMPred & 0.540 & -0.949 & 0.879 & -2.230 & 1.216 & -3.500 \\
& SenPred & 0.675 & -2.162 & 2.858 & -11.800 & 3.076 & -12.580 \\
& Voting & \textbf{0.400} & -0.455 & 0.944 & -3.450 & 1.018 & -3.910 \\
& WCP & 0.412 & \textbf{-0.010} & \textbf{0.421} & \textbf{0.000} & \textbf{0.379} & \textbf{0.000} \\
\bottomrule
\end{tabular}
\end{table}

\paragraph{Varying proportion of LLM-written content.}

We next use the Story dataset to study how performance changes with the proportion of LLM-written content. For each document, we keep an initial proportion of sentences as human-written and generate the remaining sentences using an LLM. We vary this proportion within $\{5\%, 10\%, 20\%, 40\%, 80\%\}$. Table~\ref{tab:simu-prop} shows the results of WCP and baselines. WCP consistently achieves smaller WD and more stable CE across different LLM-authored proportions, indicating that the proposed localization procedure remains effective when the relative lengths of human-written and LLM-written segments vary.

\begin{table}[htbp]
\centering
\caption{Performance of various methods on different proportions of LLM-written content on the Story dataset.}
\label{tab:simu-prop}
\small
\setlength{\tabcolsep}{3pt}
\renewcommand{\arraystretch}{1.1}
\begin{tabular}{lcc|cc|cc|cc|cc}
\toprule
& \multicolumn{2}{c}{5\%} 
& \multicolumn{2}{c}{10\%} 
& \multicolumn{2}{c}{20\%}
& \multicolumn{2}{c}{40\%}
& \multicolumn{2}{c}{80\%} \\
\cmidrule(lr){2-3} \cmidrule(lr){4-5} \cmidrule(lr){6-7} \cmidrule(lr){8-9} \cmidrule(lr){10-11}
Method & WD & CE 
& WD & CE 
& WD & CE
& WD & CE
& WD & CE \\
\midrule
TextTiling & 0.97 & -7.77 & 0.97 & -7.77 & 0.94 & -7.62 & 0.93 & -6.98 & 0.92 & -5.58 \\
LLMPred & 0.93 & -8.72 & 0.94 & -8.59 & 0.90 & -8.04 & 0.86 & -7.89 & 0.91 & -8.20 \\
SenPred    & 1.00 & -37.44 & 1.00 & -36.80 & 0.99 & -35.73 & 0.99 & -36.46 & 1.00 & -37.16 \\
Voting     & 0.69 & -8.90 & 0.48 & -5.40 & 0.46 & -5.12 & 0.52 & -6.58 & 0.81 & -11.19 \\
WCP        & \textbf{0.26} & \textbf{-2.84} & \textbf{0.21} & \textbf{-2.99} & \textbf{0.24} & \textbf{-2.80} & \textbf{0.29} & \textbf{-2.65} & \textbf{0.44} & \textbf{-2.71} \\
\bottomrule
\end{tabular}
\end{table}

\paragraph{Varying detectors for implementing algorithms.} Our method requires a sentence-level detector as input. While the main text uses AdaDetectGPT, here we examine whether the proposed change-point procedure remains effective with alternative detector scores, including log-likelihood \citep[LL;][]{gehrmann2019gltr}, log-likelihood log-rank ratio \citep[LRR;][]{su2023detectllm}, and Fast-DetectGPT \citep[FDGPT;][]{bao2024fastdetectgpt}. For a concise comparison, we report SenPred and Voting, which are among the strongest baselines in the main experiments. Table~\ref{tab:detector} shows that WCP remains competitive across all three detector choices, suggesting that our gain over baselines is not tied to a particular detector.

\begin{table}[htbp]
\centering
\caption{Robustness to different sentence-level detector scores.}
\label{tab:detector}
\small
\setlength{\tabcolsep}{5pt}
\begin{tabular}{llcccccc}
\toprule
\multirow{2}{*}{Detector} & \multirow{2}{*}{Method} & \multicolumn{2}{c}{WikiQA} & \multicolumn{2}{c}{News} & \multicolumn{2}{c}{Story} \\
\cmidrule(lr){3-4} \cmidrule(lr){5-6} \cmidrule(lr){7-8}
& & WD & CE 
& WD & CE 
& WD & CE \\
\midrule
\multirow{3}{*}{LL} 
& SenPred & 0.57 & -1.34 & 2.85 & -11.09 & 2.38 & -9.08 \\
& Voting  & 0.40 & -0.07 & 0.77 & -2.46  & 0.88 & -2.99 \\
& WCP     & \textbf{0.30} & \textbf{-0.05}  & \textbf{0.28} & \textbf{-1.18}   & \textbf{0.37} & \textbf{-1.94} \\
\midrule
\multirow{3}{*}{LRR} 
& SenPred & 0.61 & -1.57 & 3.11 & -12.31 & 2.93 & -11.98 \\
& Voting  & 0.43 & -0.17 & 0.96 & -3.26  & 0.92 & -3.31 \\
& WCP     & \textbf{0.30} & \textbf{-0.02}  & \textbf{0.37} & \textbf{-1.43}   & \textbf{0.36} & \textbf{-1.70} \\
\midrule
\multirow{3}{*}{FDGPT} 
& SenPred & 0.47 & -1.26 & 2.87 & -11.19 & 2.98 & -11.79 \\
& Voting  & 0.39 & -0.10 & 0.59 & -1.92  & 0.85 & -3.04 \\
& WCP     & \textbf{0.30} & \textbf{-0.03}  & \textbf{0.29} & \textbf{-1.69}   & \textbf{0.31} & \textbf{-1.53} \\
\bottomrule
\end{tabular}
\end{table}

\textbf{Varying tuning parameters.} We conducted additional sensitivity analyses for the threshold parameter $r$ and the weight exponent, which are reported in Table~\ref{tab:vary_r} and Table~\ref{tab:vary_power}. We omit a separate sensitivity table for $M$ because NOT-type algorithms are empirically stable once $M$ is sufficiently large. There additional sensitivity analyses are conducted for WCP on the Story dataset, which contains a single change point. Table~\ref{tab:vary_r} shows that WD is relatively stable under different choices of $r$ and the performance remains competitive compared to the results reported in Table \ref{tab:single-cp}. CE can be affected by the value of $r$, with a lower value of $r$ leading to over-estimation of the number of change points. For the weight exponent, Table~\ref{tab:vary_power} shows that setting the exponent to $2$ gives the best empirical performance in our experiments.

\begin{table}[H]
\centering
\caption{Sensitivity to $r$.}
\label{tab:vary_r}
\begin{tabular}{lcccc}
\toprule
\multirow{2}{*}{\textbf{Method}} & \multicolumn{2}{c}{\textbf{Claude 4.5}} & \multicolumn{2}{c}{\textbf{GPT-5-mini}} \\
\cmidrule(lr){2-3} \cmidrule(lr){4-5}
& \textbf{WD} & \textbf{CE} & \textbf{WD} & \textbf{CE} \\
\midrule
$r=0.25$ & 0.34{\scriptsize$\pm$0.03} & -5.08{\scriptsize$\pm$0.27} & 0.54{\scriptsize$\pm$0.03} & -5.40{\scriptsize$\pm$0.30}\\
$r=0.5$ & 0.30{\scriptsize$\pm$0.03} & -2.94{\scriptsize$\pm$0.23} & 0.48{\scriptsize$\pm$0.02} & -3.01{\scriptsize$\pm$0.25} \\
$r=1$ & \textbf{0.25{\scriptsize$\pm$0.02}} & -0.80{\scriptsize$\pm$0.18} & 0.37{\scriptsize$\pm$0.02} & \textbf{-0.47{\scriptsize$\pm$0.17}} \\
$r=2$ & 0.26{\scriptsize$\pm$0.02} & \textbf{0.50{\scriptsize$\pm$0.08}} & 0.30{\scriptsize$\pm$0.01} & 0.65{\scriptsize$\pm$0.08} \\
$r=4$ & 0.28{\scriptsize$\pm$0.01} & 0.92{\scriptsize$\pm$0.03} & \textbf{0.29{\scriptsize$\pm$0.00}} & 0.99{\scriptsize$\pm$0.01} \\
\bottomrule
\end{tabular}
\end{table}

\begin{table}[H]
\centering
\caption{Sensitivity to weight exponent.}
\label{tab:vary_power}
\begin{tabular}{lcccc}
\toprule
\multirow{2}{*}{\textbf{Method}} & \multicolumn{2}{c}{\textbf{Claude 4.5}} & \multicolumn{2}{c}{\textbf{GPT-5-mini}} \\
\cmidrule(lr){2-3} \cmidrule(lr){4-5}
& \textbf{WD} & \textbf{CE} & \textbf{WD} & \textbf{CE} \\
\midrule
$p=0$ & 0.36{\scriptsize$\pm$0.02} & -1.88{\scriptsize$\pm$0.17} & 0.45{\scriptsize$\pm$0.02} & -1.78{\scriptsize$\pm$0.18} \\
$p=0.25$ & 0.34{\scriptsize$\pm$0.02} & -1.70{\scriptsize$\pm$0.16} & 0.44{\scriptsize$\pm$0.02} & -1.61{\scriptsize$\pm$0.18}\\
$p=0.5$ & 0.33{\scriptsize$\pm$0.02} & -1.51{\scriptsize$\pm$0.17} & 0.43{\scriptsize$\pm$0.02} & -1.30{\scriptsize$\pm$0.16} \\
$p=1$ & 0.32{\scriptsize$\pm$0.02} & -1.07{\scriptsize$\pm$0.17} & 0.41{\scriptsize$\pm$0.02} & -0.99{\scriptsize$\pm$0.15} \\
$p=2$ & \textbf{0.25{\scriptsize$\pm$0.02}} & \textbf{-0.80{\scriptsize$\pm$0.18}} & \textbf{0.37{\scriptsize$\pm$0.02}} & \textbf{-0.47{\scriptsize$\pm$0.17}} \\
\bottomrule
\end{tabular}
\end{table}

\subsection{Additional experiments on token-level detection}\label{sec:token-experiments}
We further evaluate the proposed localization idea in a token-level setting, which provides a finer-grained test of mixed human--LLM editing. We consider cases where either 40 or 100 tokens are generated by an LLM. We compare our method with Voting and TokenPred, where TokenPred applies the detection statistic to a 20-token window around each target token before classification. The results in Table~\ref{tab:llm-generated-tokens} show that our method achieves substantially smaller WD and more stable CE, indicating that the change-point formulation remains useful.

\begin{table}[htbp]
\centering
\caption{Token-level localization under different lengths of LLM-generated content.}
\label{tab:llm-generated-tokens}
\small
\setlength{\tabcolsep}{5pt}
\begin{tabular}{llcc}
\toprule
Number & Methods & WD & CE \\
\midrule
\multirow{3}{*}{40}
& TokenPred & 10.62 & -42.58 \\
& Voting & 1.127 & -4.58 \\
& WCP & \textbf{0.379} & \textbf{-0.98} \\
\midrule
\multirow{3}{*}{100}
& TokenPred & 21.483 & -90.818 \\
& Voting & 2.611 & -9.737 \\
& WCP & \textbf{0.222} & \textbf{0.172} \\
\bottomrule
\end{tabular}
\end{table}

\section{Experiment details}\label{sec:exp-details}

\textbf{Implementation details}. For VCP and WCP in the reported experiments, we fix the input parameter $M$ in Algorithm~\ref{alg:not_algorithm} at 200 and set $r=\sqrt{\log(N)}$. 

For LLMPred, we use the following prompt to identify LLM-generated segments:

\begin{tcolorbox}[colback=white,colframe=black!50, title=Prompt for implementing LLMPred]
\texttt{You are an expert in determining the LLM-written sentences in text. Return ONLY the sentence indices (starting from 0) that are written by a language model. \\\\ The text to detect is: [filled in the bracket]. The sentence indices are: \\}
\end{tcolorbox}

For TextTiling, we use the implementation in \texttt{nltk.tokenize.texttiling} \citep{bird2009natural}. 

For SegFormer in the CoAuthor experiment, we use the implementation provided by \citet{zeng2024aisentence}, which is publicly available on GitHub\footnote{\url{https://github.com/douglashiwo/AISentenceDetection/blob/main/A-Segment_Detection_Models/SegFormer.zip}}.

\textbf{Human--LLM co-authored text generation procedure}. As mentioned in Section~\ref{sec:exp}, we asked one LLM to continue pieces of human-written text to obtain semantically fluent co-authored text. The prompt for generating the LLM-written text is:

\begin{tcolorbox}[colback=white,colframe=black!50, title=Prompt on generating the LLM-written text]
\texttt{[System prompt] You are a creative writing assistant. Continue the given text naturally and fluently, matching the style and tone of what came before. \\\\ User prompt: Continue the following text by writing approximately \{k\} more sentences. Return ONLY the new continuation sentences, without repeating the original text. \\\\ \{preceding text filled in the bracket\} }
\end{tcolorbox}

For generating the texts for the paragraph-level detection, we makes some slight modification the prompt according:
\begin{tcolorbox}[colback=white,colframe=black!50, title=Prompt on generating the LLM-written text]
\texttt{[System prompt] Continue the supplied text naturally. Return only the new continuation, without repeating the supplied text or adding explanations, headings, or commentary.\\\\ Continue the following text by writing exactly \{k\} new sentences. Return the new sentences as one paragraph.}\\\\ \{preceding text filled in the bracket\} 
\end{tcolorbox}

Without loss of generality, we perform sampling with the temperature set to 0.8. Claude 4.5 in the main text corresponds to \texttt{claude-haiku-4-5}, and the GPT-5-mini model corresponds to \texttt{gpt-5-mini}. 

\textbf{Settings for LLM detectors.} We consider several LLM detectors: AdaDetectGPT, LL, LRR, and FastDetectGPT. For AdaDetectGPT, its scoring model is a pretrained transformation applied consistently across all experiments. For FastDetectGPT, we set \(\texttt{score}\) and \(\texttt{sample}\) to be the same model, chosen from \texttt{google/gemma-2-9b-it}. Finally, LL and LRR also require a scoring model, which we set to \texttt{google/gemma-2-9b-it}, as in FastDetectGPT. 

\textbf{Evaluation details}. 
Let $C_i^{(r)}$ and $C_i^{(h)}$ denote the numbers of true and estimated change points in the $i$-th sliding window.
The WindowDiff metric is defined as:
\[
\mathrm{WD} = \frac{1}{T-k}\sum_{i=1}^{T-k} |C_i^{(r)}-C_i^{(h)}|.
\]

\textbf{The collection of Coauthor dataset}. The CoAuthor data were collected through a custom text-editor interface. At the beginning of each session, writers were given a prompt and asked to write either a creative story or an argumentative essay. During writing, they could request GPT-3 suggestions through a shortcut whenever needed. The interface logged the full interaction trace, including writers' own text, GPT-3 suggestions, whether suggestions were accepted or dismissed, and subsequent edits to accepted suggestions or previous text. In total, CoAuthor contains 1,445 writing sessions produced by 63 writers interacting with four GPT-3 instances, including 830 creative-writing stories and 615 argumentative-writing essays.

\textbf{Baselines in Coauthor dataset}. Since pretrained checkpoints for SegFormer are not publicly available, we train their model from scratch. We split the CoAuthor dataset into training (60\%), validation (20\%), and test (20\%) sets, and compute all metrics on the held-out test set. 

When implementing our methods, we apply a clustering step with $k=3$, after the change points are estimated. The cluster with the highest mean is assigned to the fully LLM-generated class, the cluster with the lowest mean is assigned to the human-written class, and the remaining cluster is assigned to the collaboratively written class.

\textbf{Hardware setting}. All experiments are conducted on an H20 96 GB GPU with 96 Intel(R) Xeon(R) Platinum 8255C CPUs @ 2.50 GHz. 

\end{document}